\newtheorem{theorem}{Theorem}
\newtheorem{definition}{Definition}
\mathchardef\dash="2D
\mathchardef\dash="2D
\begin{document}

\title{Learning Manipulation Skills Via Hierarchical Spatial Attention}


\author{Marcus Gualtieri and Robert Platt
\thanks{Khoury College of Computer Sciences, Northeastern University, Boston,
MA, 02115 USA e-mail: mgualti@ccs.neu.edu.}}

%

\maketitle

\begin{abstract}
Learning generalizable skills in robotic manipulation has long been challenging due to real-world sized observation and action spaces. One method for addressing this problem is attention focus -- the robot learns where to attend its sensors and irrelevant details are ignored. However, these methods have largely not caught on due to the difficulty of learning a good attention policy and the added partial observability induced by a narrowed window of focus. This article addresses the first issue by constraining gazes to a spatial hierarchy. For the second issue, we identify a case where the partial observability induced by attention does not prevent Q-learning from finding an optimal policy. We conclude with real-robot experiments on challenging pick-place tasks demonstrating the applicability of the approach.
\end{abstract}


%

\section{Introduction}

Learning robotic manipulation has remained an active and challenging research area. This is because the real-world environments in which robots exist are large, dynamic, and complex. Partial observability -- where the robot does not at once perceive the entire environment -- is common and requires reasoning over past perceptions. Additionally, the ability to generalize to new situations is critical because, in the real world, new objects can appear in different places unexpectedly.

The particular problem addressed in this paper is the large space of possible robot observations and actions -- how the robot processes its past and current perceptions to make high-dimensional decisions. Visual attention has long been suggested as a solution to this problem \cite{Whitehead1991}. Focused perceptions can ignore irrelevant details, and generalization is improved by the elimination of the many irrelevant combinations of object arrangements \cite{Whitehead1991}. Additionally, as we later show, attention can result in a substantial reduction to the number of actions that need considered. Indeed, when selecting position, the number of action choices can become logarithmic rather than linear in the volume of the robot's workspace. In spite of these benefits, visual attention has largely not caught on due to (a) the additional burden of learning where to attend and (b) additional partial observability caused by the narrowed focus.

We address the first challenge -- efficiently learning where to attend -- by constraining the system to a spatial hierarchy of attention. On a high level this means the robot must first see a large part of the scene in low detail, select a position within that observation, and see the next observation in more detail at the position previously selected, and so on for a fixed number of gazes. We address the second challenge -- partial observability induced by the narrowed focus -- by identifying attention with a type of state-abstraction which preserves the ability to learn optimal policies with efficient reinforcement learning (RL) algorithms.

This article extends our prior work \cite{Gualtieri2018B}, wherein we introduced the hierarchical spatial attention (HSA) approach and demonstrated it on 3 challenging, 6-DoF, pick-place tasks. New additions include (a) faster training and inference times, (b) more ablation studies and comparisons to related work, (c) better understanding of when an optimal policy can be learned when using this approach, (d) longer time horizons, and (e) improved real-robot experimental results.

The rest of the paper is organized as follows. First is related work (Section~\ref{sec:relatedWork}). Next, the general manipulation problem is described and the visual attention aspect is added (Sections~\ref{sec:problem} and \ref{sec:senseMoveEffect}). After that, the HSA constraints are added, and this approach is viewed as a generalization of earlier approaches (Section~\ref{sec:hsa} to \ref{sec:implementation}). The bulk of the paper includes analysis and comparisons in 4 domains of increasing complexity (Section~\ref{sec:domains}). Real robot experiments are described close to the end (Sections~\ref{sec:bottlesOnCoasters} and \ref{sec:pickPlace}). Finally, we conclude with what we learned and future directions (Section~\ref{sec:conclusion}).

\section{Related Work}
\label{sec:relatedWork}


This work is most related to robotic manipulation, reinforcement learning, and attention models. It is extends our prior research on 6-DoF pick-place \cite{Gualtieri2018B} and primarily builds on DQN \cite{Mnih2015} and Deictic Image Mapping \cite{Platt2019}.

\subsection{Learning Robotic Manipulation}
\label{sec:learningRoboticManipulation}

Traditional approaches to robotic manipulation consider known objects -- a model of every object to be manipulated is provided in advance \cite{LozanoPerez1986,Tournassoud1987,Tremblay2018}. While these systems can be quite robust in controlled environments, they encounter failures when the shapes of the objects differ from expected. Recent work has demonstrated grasping of novel objects by employing techniques intended to address the problem of generalization in machine learning \cite{Lenz2015,Pinto2016,Levine2016A,Mahler2017,tenPas2017,Kalashnikov2018,Morrison2018,Quillen2018,Zeng2018}.

There have been attempts to extend novel object grasping to more complex tasks such as pick-place. However, these have assumed either fixed grasp choices \cite{Jiang2012} or fixed place choices \cite{Gualtieri2018A}. The objective of the present work is to generalize these attempts -- a single system that can find 6-DoF grasp and place poses.

Other research considers grasping and pushing novel objects to a target location \cite{Xie2018}. Their approach is quite different: a predictive model of the environment is learned and used for planning, whereas we aim to learn a policy directly. Other work has considered the problem of domain transfer \cite{James2017} and sparse rewards in RL \cite{Andrychowicz2017}. We view these as complimentary ideas that could be combined with our approach for an improvement.


\subsection{Reinforcement Learning}

Like several others, we apply RL techniques to the problem of robotic manipulation (see above-mentioned \cite{Levine2016A,Kalashnikov2018,Quillen2018,Gualtieri2018A,Andrychowicz2017} and survey \cite{Kober2013}). RL is appealing for robotic control for several reasons. First, several algorithms (e.g., \cite{Watkins1989,Rummery1994}) do not require a complete model of the environment. This is of particular relevance to robotics, where the environment is dynamic and difficult to describe exactly. Additionally, observations are often encoded as camera or depth sensor images. Deep Q-Networks (DQN) demonstrated an agent learning difficult tasks (Atari games) where observations were image sequences and actions were discrete \cite{Mnih2015}. An alternative to DQN that can handle continuous action spaces are actor-critic methods like DDPG \cite{Lillicrap2015}. Finally, RL -- which has its roots in optimal control -- provides tools for the analysis of learning optimal behavior (e.g. \cite{Watkins1992,Jaakkola1994,Li2006}), which we refer to in Section~\ref{sec:tabularDomain}.

\subsection{Attention Models}

Our approach is inspired by models of visual attention. Following the early work of Whitehead and Ballard \cite{Whitehead1991}, we distinguish overt actions (which directly affect change to the environment) from perceptual actions (which retrieve information). Similar to their agent model, our abstract robot has a virtual sensor which can be used to focus attention on task-relevant parts of the scene. The present work updates their methodology to address more realistic problems, and we extend their analysis by describing a situation where an optimal policy can be learned even in the presence of ``perceptual aliasing'' (i.e. partial observability).

Attention mechanisms have also been used with artificial neural networks to identify an object of interest in a 2D image \cite{Sprague2004,Larochelle2010,Mnih2014,Jaderberg2015}. Our situation is more complex in that we identify 6-DoF poses of the robot's hand. Improved grasp performance has been observed by active control of the robot's sensor \cite{Gualtieri2017,Morrison2019}. These methods attempt to identify the best sensor placement for grasp success. In contrast, our robot learns to control a virtual sensor for the purpose of reducing the complexity of action selection and learning.

Work contemporary with ours also considered attention for controlling high-dimensional manipulators \cite{Wu2019}. Important differences from our approach include the use of policy gradient instead of value-based methods, sensing 2D depth instead of 3D point clouds, and learned instead of fixed attention parameters.

\section{Problem Statement}
\label{sec:problem}

The problem considered herein is learning to control a move-effect system (Fig.~\ref{fig:moveEffect}, cf. \cite{Platt2019}):

\begin{definition}[Move-Effect System]
A \emph{move-effect system} is a discrete time system consisting of a robot, equipped with a depth sensor and end effector (e.e.), and rigid objects of various shapes and configurations. The robot perceives a history of point clouds $C_1, \dots, C_k$, where $C_i \in \mathbb{R}^{n_c\times3}$ is acquired by the depth sensor; an e.e. status, $h \in \{ 1, \dots, n_\mathit{h}\}$; and a reward $r \in \mathbb{R}$. The robot's action is $\mathit{move\dash effect(T_\mathit{ee}, o)}$, where $T_\mathit{ee} \in W$ is the pose of the e.e., $W \subseteq \mathit{SE}(3)$ is the robot's workspace, and $o \in \{1, \dots, n_o\}$ is a preprogrammed controller for the e.e. For each stage $t=1,\dots,t_\mathit{max}$, the robot receives a new perception and takes an action.
\end{definition}

\noindent The reward is usually instrumented by the system engineer to indicate progress toward completion of some desired task. The robot initially has no knowledge of the system's state transition dynamics. The objective is, by experiencing a sequence of episodes, for the robot to learn a policy -- a mapping from observations to actions -- which maximizes the expected sum of per-episode rewards.

\begin{figure}[ht]
  \centering
  \includegraphics[width=0.70\linewidth]{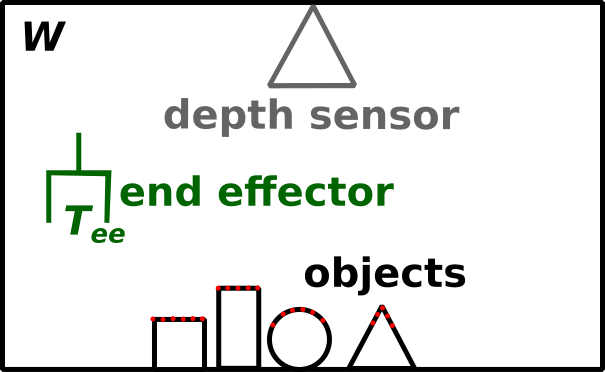}
  \caption{The move-effect system. The robot has an e.e. which can be moved to pose $T_\mathit{ee}$ to perform operation $o$.}
  \label{fig:moveEffect}
\end{figure}

For example, suppose the e.e. is a 2-fingered gripper, $o \in \{\mathit{open}, \mathit{close} \}$, $h \in \{\mathit{empty}, \mathit{holding} \}$, the objects are bottles and coasters, and the task is to place all the bottles on the coasters. The reward could be $1$ for placing a bottle on a coaster, $-1$ for removing a placed bottle, and $0$ otherwise.

\section{Approach}
\label{sec:approach}

Our approach has two parts. The first part is to reformulate the problem as a Markov decision process (MDP) with abstract states and actions (Section~\ref{sec:senseMoveEffect}). With this reformulation, the resulting state representation is substantially reduced, and it becomes possible for the robot to learn to restrict attention to task-relevant parts of the scene. The second part is to add constraints to the actions so that e.e. pose is decided sequentially (Section~\ref{sec:hsa}). After these improvements, the problem is then amenable to solution via standard RL algorithms like DQN.

\subsection{Sense-Move-Effect MDP}
\label{sec:senseMoveEffect}

The sense-move-effect system adds a controllable, virtual sensor which perceives a portion of the point cloud from a parameterizable perspective (Fig.~\ref{fig:senseMoveEffect}). 

\begin{definition}[Sense-Move-Effect System]
A \emph{sense-move-effect system} is a move-effect system where the robot's actions are augmented with $\mathit{sense}(T_s, z)$ (where $T_s \in W$ and $z \in \mathbb{R}_{>0}^3$) and the point cloud observations $C_1, \dots, C_k$ are replaced with a history of $k$ images, $I_1, \dots, I_k$ (where $I \in \mathbb{R}^{n_\mathit{ch} \times n_x \times n_y}$). The $\mathit{sense}$ action has the effect of adding $I = \mathit{Proj}(\mathit{Crop}(\mathit{Trans}(T_s, C_k), z))$ to the history.
\footnote{$\mathit{Proj}: \mathbb{R}^{n_c \times 3} \to \mathbb{R}^{n_\mathit{ch}\times n_x\times n_y}$ is $n_\mathit{ch}$ orthographic projections of points onto $n_\mathit{ch}$, $n_x \times n_y$ images. Each image plane is positioned at the origin with a different orientation. Image values are the point to plane distance, ambiguities resolved with the nearest distance. $\mathit{Crop}: \mathbb{R}^{n_c \times 3} \to  \mathbb{R}^{n_{c'} \times 3}$ returns the $n_{c'} \leq n_c$ points of $C$ which lie inside a rectangular volume situated at the origin with length, width, height $z$. $\mathit{Trans}(T_s, C)$ expresses $C$ (initially expressed w.rt. the world frame) w.r.t. $T_s$.}
\end{definition}

\begin{figure}[ht]
  \centering
  \includegraphics[width=0.70\linewidth]{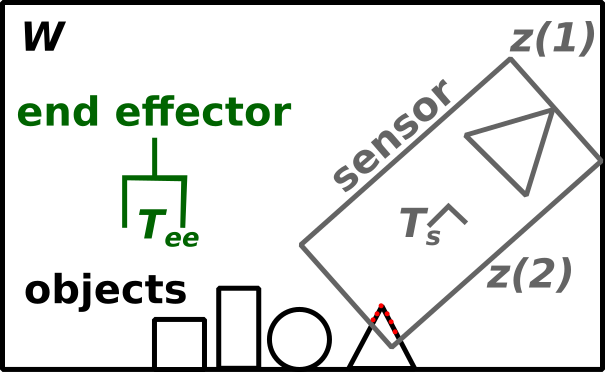}
  \caption{The sense-move-effect system adds a virtual, mobile sensor which observes points in a rectangular volume at pose $T_s$ with size $z$.}
  \label{fig:senseMoveEffect}
\end{figure}

The $\mathit{sense}$ action makes it possible for the robot to get either a compact overview of the scene or to attend to a small part of the scene in detail. Since the resolution of the images is fixed, large values of $z$ correspond to seeing more objects in less detail, and small values of $z$ correspond to seeing less objects in more detail.

The robot's memory need not include the last $k$ images -- it can include any previous $k$ images selected according to a predetermined strategy. Because the environment only changes after $\mathit{move\dash effect}$ actions, we keep the latest image, $I_k$, and the last $k-1$ images that appeared just before $\mathit{move\dash effect}$ actions. Fig.~\ref{fig:stateImages} shows an example in the bottles on coasters domain.

\begin{figure}[ht]
  \centering
  \includegraphics[width=0.23\linewidth]{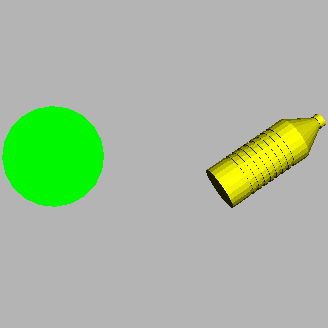}
  \includegraphics[width=0.23\linewidth]{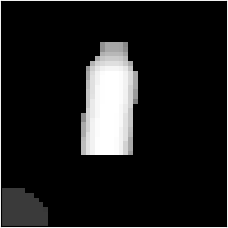}
  \includegraphics[width=0.23\linewidth]{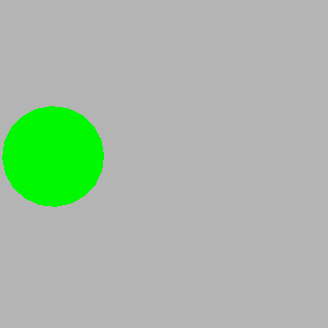}
  \includegraphics[width=0.23\linewidth]{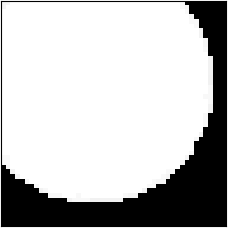}
  \caption{Scene and observed images for $k=2$ and $n_\mathit{ch}=1$. \textbf{Left}. Scene's initial appearance. \textbf{Left center}. $\mathit{sense}$ image (large $z$) just before $\mathit{move\dash effect}(T_\mathit{ee}, \mathit{close})$. \textbf{Right center}. Scene's current appearance. \textbf{Right}. Current $\mathit{sense}$ image, focused on the coaster (small $z$).}
  \label{fig:stateImages}
\end{figure}

In order to apply standard RL algorithms to the problem of learning to control a sense-move-effect system, we define the sense-move-effect MDP.

\begin{definition}[Sense-Move-Effect MDP]
Given a sense-move-effect system, a reward function, and transition dynamics, a \emph{sense-move-effect MDP} is a finite horizon MDP where states are sense-move-effect system observations and actions are sense-move-effect system actions.
\end{definition}

\noindent The reward function and transition details are task and domain specific, respectively, examples of which are given in Section~\ref{sec:domains}.

\subsection{Hierarchical Spatial Attention}
\label{sec:hsa}

The observation is now similar to that of DQN -- a short history of images plus the e.e. status -- and can be used by a Q-network to approximate Q-values. However, the action space remains large due to the 6-DoF choice for $T_\mathit{s}$ or $T_\mathit{ee}$ and the 3-DoF choice for $z$. Additionally, it may take a long time for the robot to learn which $\mathit{sense}$ actions result in useful observations. To remedy both issues, we design constraints to the sense-move-effect actions.

\begin{definition}[Hierarchical Spatial Attention]
Given a sense-move-effect system, $L \in \mathbb{N}_{>0}$, $T_s^1 \in W$, and the list of pairs $[(z_1, d_1), \dots, (z_L, d_L)]$, (where $z_i \in \mathbb{R}^3_{>0}$ and $d_i \in \mathbb{R}^6$), \emph{hierarchical spatial attention (HSA)} constrains the robot to take $L$ $\mathit{sense}(T_s, z)$ actions, with $z=z_i$ for $i = 1, \dots, L$, prior to each $\mathit{move\dash effect}$ action. Furthermore, the first sensor pose in this sequence must be $T_s^1$; the sensor poses $T_s^{i+1}$, for $i = 1, \dots, L-1$, must be offset no more than $d_i$ from $T_s^i$; and e.e. pose $T_\mathit{ee}$ must be offset no more than $d_L$ of $T_s^L$.%
\footnote{Concretely, $d_i = [x, y, z, \theta, \phi, \rho]$ indicates a position offset of $\pm x/2$, $\pm y/2$, and $\pm z/2$ and a rotation offset of $\pm \theta/2$, $\pm \phi/2$, and $\pm \rho/2$.}
\end{definition}

The process is thus divided into $t_\mathit{max}$ \textit{overt stages}, where, for each stage, $L$ $\mathit{sense}$ actions are followed by 1 $\mathit{move\dash effect}$ action (Fig.~\ref{fig:process}). The constraints should be set such that the observation size $z_i$ and offset $d_i$ decrease as $i$ increases, so the point cloud under observation decreases in size, and the volume within which the e.e. pose can be selected is also decreasing. These constraints are called hierarchical spatial attention because the robot is forced to learn to attend to a small part of the scene (e.g., Fig.~\ref{fig:hsaExample}).

\begin{figure}[ht]
  \centering
  \includegraphics[width=\linewidth]{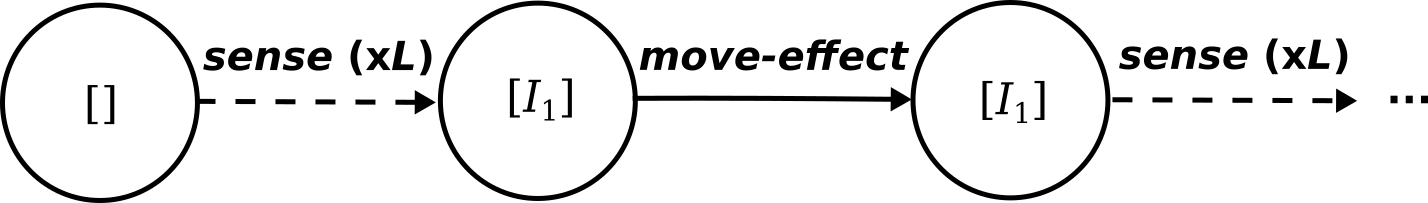}
  \caption{Initially, the state is empty. Then, $L$ sense actions are taken, at each point the latest image is state. After this, the robot takes 1 $\mathit{move\dash effect}$ action. Then, the process repeats, but with the last image before $\mathit{move\dash effect}$ saved to memory.}
  \label{fig:process}
\end{figure}

\begin{figure}[ht]
  \centering
  \includegraphics[width=0.23\linewidth]{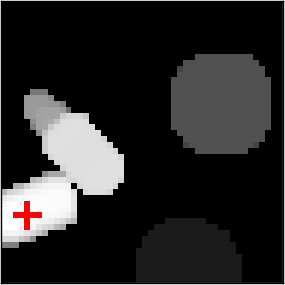}
  \includegraphics[width=0.23\linewidth]{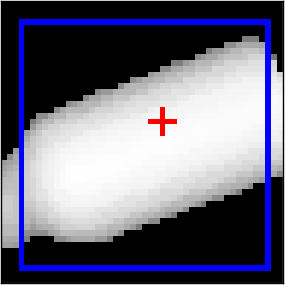}
  \includegraphics[width=0.23\linewidth]{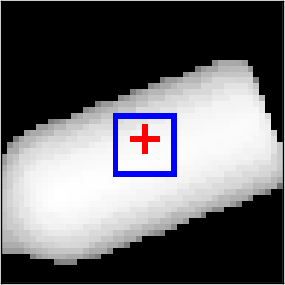}
  \includegraphics[width=0.23\linewidth]{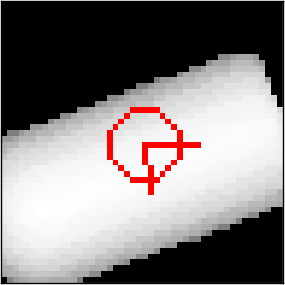}
  \caption{HSA applied to grasping in the bottles on coasters domain (Section~\ref{sec:bottlesOnCoasters}). There are 4 levels (i.e. $L=4$). The sensor's volume size $z$ is $36 \times 36 \times 23.75$ cm for level 1, $10.5 \times 10.5 \times 47.5$ cm for levels 2 and 3, and $9 \times 9 \times 47.5$ cm for level 4. As indicated by blue squares, $d$ constrains position in the range of $\pm 18 \times 18 \times 6.875$ cm for level 1, $\pm 4.5$ cm$^3$ for level 2, and $\pm 1.125$ cm$^3$ for level 3. Orientation is selected for level 4 in the range of $\pm 90^\circ$ about the hand approach axis. Red crosses indicate the $x,y$ position selected by the robot, and the red circle indicates the angle selected by the robot. Positions are sampled uniformly on a $6\times6\times6$ grid and 60 orientations are uniformly sampled. Pixel values normalized and height selection not shown for improved visualization.}
  \label{fig:hsaExample}
\end{figure}

To see how HSA can improve action sample efficiency, consider the problem of selecting position in a 3D volume. Let $\alpha$ be the largest volume allowed per sample. With naive sampling, the required number of samples $n_s$ is proportional to the workspace volume $v_0 = d_1(1) d_1(2) d_1(3)$, i.e., $n_s = \lceil v_0/\alpha \rceil$. But with HSA, we select position sequentially, by say, halving the volume size in each direction at each step, i.e., $d_{i+1} = 0.5 d_i$. In this case $8L$ samples are needed, i.e., a sample for each octant at each step. The volume represented by each sample at step $i$, for $i=1,\dots,L$, is $v_i = v_0 / 8^i$. To get $v_L \leq \alpha$, i.e., to get the volume represented by samples used for selecting e.e. position to be no more than $\alpha$, $L = \lceil \text{log}_8 (v_0 / \alpha) \rceil$. Thus, with HSA, the sample complexity becomes logarithmic, rather than linear, in $v_0$.

\subsection{Lookahead Sense-Move-Effect}
\label{sec:lookahead}

So far we have not specified how action parameters $T_s$, $T_\mathit{ee}$, and $z$ are encoded. For \textit{standard sense-move-effect}, these are typically encoded as 6 floating point numbers representing the pose $T$ and 3 floating point numbers representing the volume size $z$. Alternatively, the pair $(T, z)$ could be encoded as the $\textit{sense}$ image that \emph{would} be seen if the sensor were to move to pose $T$ with zoom $z$. This is as if the action was ``looking ahead'' at the pose the sensor or e.e. would move to if this action were selected.

In particular, the \textit{lookahead sense-move-effect} MDP has actions $\mathit{sense}(T_s, z_s)$ and $\mathit{move\dash effect}(T_\mathit{ee}, z_\mathit{ee}, o)$, the difference being the additional parameter $z_\mathit{ee} \in \mathbb{R}^3_{>0}$ for $\mathit{move\dash effect}$. The action samples are encoded as the height map that would be generated by $\mathit{sense}(T, z)$. Because action has this rich encoding, state is just the e.e. status and a history of $k$ actions.

The HSA constraints for the lookahead variant have the same parameterization -- an initial pose $T_s^1$ and a list of pairs $\left[ (z_1, d_1), \dots, (z_L, d_L) \right]$. The semantics are slightly different. $z_i$ for $i = 1, \dots, L-1$ is the $z_s$ parameter for the $i$th $\mathit{sense}$, and $z_L$ is the $z_\mathit{ee}$ parameter. The $d_i$ for $i = 1, \dots, L-1$ specify the offset of the $\mathit{sense}$ action samples relative to the last pose decided, $T_s^i$. $d_L$ specifies the offset of $T_\mathit{ee}$ relative to $T_s^{L}$.

\subsection{Relation to Other Approaches in the Literature}
\label{sec:otherApproaches}

\subsubsection{DQN}
Consider a sense-move-effect MDP with HSA constraints $L=1$, $T_s^1$ centered in the robot's workspace, and $z_1$ and $d_1$ large enough to capture the entire workspace. The only free action parameters for this system are the e.e. pose, which is sampled uniformly and spaced appropriately for the task, and the e.e. operation. In this case, the observations and actions are similar to that of DQN \cite{Mnih2015}, and the DQN algorithm can be applied to the resulting MDP.

However, this approach is problematic in robotics because the required number of action samples is large, and the image resolution would need to be high in order to capture the required details of the scene. For example, a pick-place task where e.e. poses are in $\mathit{SE}(3)$, the robot workspace is 1 m$^3$, the required position precision is 1 mm, and the required orientation resolution is 1$^\circ$ per Euler angle requires on the order of $10^{17}$ actions. Adding more levels (i.e. $L>1$) alleviates this problem.

\subsubsection{Deictic Image Mapping}
With $L=1$, $T_s^1$ centered in the robot's workspace, $z_1$ the deictic marker size (e.g., the size of the largest object to be manipulated), and $d_1$ large enough to capture the entire workspace, HSA applied to the lookahead sense-move-effect MDP is the Deictic Image Mapping representation \cite{Platt2019}. Similar to the case with DQN, if the space of e.e. poses is large, and precise positioning is needed, many actions need to be sampled. In fact, the computational burden with the Deictic Image Mapping representation is even larger than that of DQN due to the need to create images for each action. Yet, the deictic representation has significant advantages over DQN in terms of efficient learning due to its small observations \cite{Platt2019}.

HSA generalizes and improves upon both DQN and Deictic Image Mapping by overcoming the burden for the agent to select from many actions in a single time step. Instead, the agent sequentially refines its choice of e.e. pose over a sequence of $L$ decisions. We provide comparisons between these approaches in Section~\ref{sec:domains}.

\subsection{Implementation Methods}
\label{sec:implementation}

To implement HSA for a sense-move-effect MDP, it is necessary to select values for HSA parameters and a training algorithm. Here we provide rough guidelines for making both choices for standard HSA.

\subsubsection{HSA Parameter Values}
\label{sec:parameters}

Ideal values for $T_s^1$, $L$, and $[(z_1, d_1), \dots (z_L, d_L)]$ depend on the position and size of the robot's workspace, the desired e.e. precision, and available computing resources. In our implementations, we have separate levels for selecting position and orientation, with position selecting levels occurring first. The procedure for deciding position selecting levels is as follows. First, the position component of the initial sensor pose $T_s^1$ is set to the center of the robot's workspace. Second, the number of action samples $n_s$ depends on computing resources, e.g., the number of Q-values that can be evaluated in parallel. If $n_s=n^3$, where $n$ is the number of position samples spaced evenly along an axis, then $n$ is set to the largest integer such that $n_s$ samples can be evaluated efficiently. Third, the number of levels $L$ is the minimum number of times the workspace needs divided to achieve the desired e.e. precision. If $p \in \mathbb{R}_{>0}^3$ is the desired e.e. precision and $w \in \mathbb{R}_{>0}^3$ is the size of the workspace, $L=\max_{i=1,\dots,3}\lceil\log_n(w(i)/p(i))\rceil$. Fourth, sampling regions $d_i$ for $i=1,\dots,L$ should be large enough so that, if patches size $d_i$ are centered on samples in level $i-1$, the entire region is covered: $d_i=w/n^{i-1}$. Lastly, observation sizes $z_i$ for $i=1,\dots,L$ should be equal to $d_i$ or the size of the largest object to be manipulated, whichever is largest. The latter condition is necessary if the entire object must be visible to determine the appropriate action. For example, when grasping bottles to be placed upright, either the top or bottom of the bottle must be visible to determine bottle orientation in the hand. Deciding orientation selecting levels is simpler: add 1 level per Euler angle, each with the desired angular e.e. precision.


\subsubsection{Training Algorithm}
\label{sec:trainingAlgorithm}

\begin{algorithm}[ht]
\DontPrintSemicolon
\caption{Train standard HSA.}
\label{alg:training}
\SetKwInOut{input}{Input}
\input{$\mathit{nEpisodes}$, $t_\mathit{max}$, $T_1$, $n_s$, $L$, $[(z_1, d_1), \dots, (z_L, d_L)]$, $\mathit{maxExperiences}$, $\mathit{trainEvery}$}
Initialize $Q$, $D$, $\epsilon$\;
\For {$i \gets 1, ..., \mathit{nEpisodes}$}
{
  $\mathit{env} \gets \mathit{initialize\dash environment}(i)$\;
  
  \For {$t \gets 1, \dots, t_{\mathit{max}}$}
  {
    $h = \mathit{get\dash ee\dash status}(env)$\;
    
    $I_h = \textsc{null}$\;
    \If {$t > 1 \land h = \mathit{holding}$}
    {
      $I_h \gets I$
    }
    
    \For {$l \gets 1, \dots, L$}
    {
      $I \gets \mathit{sense}(T_l, z_l)$\;
      $o' \gets (h, I_h, I)$\;
      $T_{l+1} \gets \mathit{get\dash pose}(Q, o', T_l, d_l, n_s, \epsilon)$\;
      $a' \gets T_l^{-1}T_{l+1}$\;
      
      \If {$t > 1 \lor l > 1$}
      {
        $D \gets D \cup \{(o, a, o', r)\}$\;
      }
      
      $o \gets o'$; $a \gets a'$; $r \gets 0$\;
    }
    
    $\mathit{op} \gets \mathit{get\dash ee\dash op}(h)$\;
    $\mathit{overtAct} \gets \mathit{move\dash effect}(T_{L+1}, op)$\;
    $r \gets \mathit{transition}(env, overtAct)$\;
  }
  
  $D \gets D \cup \{(o, a, \textsc{null}, r)\}$\;
  
  \If {$\mathit{modulo}(i, \mathit{trainEvery}) = 0$}
  {
    $D \gets \mathit{prune\dash exp}(D, \mathit{maxExperiences})$\;
    $Q \gets \mathit{update\dash q\dash function}(D, Q)$\;
  }
  $\epsilon \gets \mathit{decrease\dash epsilon}(|D|)$\;
}
\end{algorithm}

Algorithm~\ref{alg:training} is a variant of DQN \cite{Mnih2015} that follows the HSA constraints. For concreteness, this implementation stores experiences for Q-learning; modification for other temporal difference (TD) update rules, such as Sarsa \cite{Rummery1994} or Monte Carlo (MC) \cite{Sutton2018}, is straight-forward. For simplicity of exposition, we also restrict to the case where image history consists of the current image $I$ and the image $I_h$ before the last grasp, e.e. status is binary $\mathit{empty}$ or $\mathit{holding}$, and the e.e. operation is binary $\mathit{open}$ or $\mathit{close}$. 

Initially, the Q-function gets random weights, the experience replay database is empty, and the probability of taking random actions $\epsilon=1$ (line 1). The environment is initialized to a scene unique to each episode (line 2). For each time step, the e.e. status is observed (line 5), and $I_h$ is the previously observed image if the e.e. is holding something (lines 6-8). Then, for each HSA level, a $\mathit{sense}$ action is taken (line 10), the pose of the next $\mathit{sense}$ action is determined either randomly or according to $Q$ (line 12), and the experience is saved (line 15). Actions are encoded relative to the previous sense pose (line 13). Next, the robot moves the e.e. to $T_{L+1}$ and performs an operation $\mathit{op}$, after which a reward is observed (lines 17 - 19). Finally, after $\mathit{trainEvery}$ episodes, the $Q$ function is updated with the current experiences (lines 21 - 23), and $\epsilon$ is set inversely proportional to the number of experiences (line 24).

\section{Application Domains}
\label{sec:domains}

In this section we compare the HSA approach in 4 application domains of increasing complexity. The complexity increases in terms of the size of the action space and in terms of the diversity of object poses and geometries. We analyze simpler domains because the results are more interpretable and learning is faster (Table~\ref{tab:simulationTimes}). More complex domains are included to demonstrate the practicality of the approach. All training is in simulation, and Sections~\ref{sec:bottlesOnCoasters} and~\ref{sec:pickPlace} include test results for a psychical robotic system. Source code for reproducing the simulated experiments is available at \cite{Gualtieri2019}.

\begin{table*}[ht]
  \centering
  \begin{tabular}{|c|c|c|c|c|c|}
  \hline
  & Tabular Pegs on Disks & Upright Pegs on Disks & Bottles on Coasters & 6-Dof Pick-Place\\
  \hline
  Time (hours) & 0.23 & 1.29 & 8.12 & 96.54\\
  \hline
  \end{tabular}
  \caption{Average simulation time for the 4 test domains. Times are averaged over 10 or more simulations on 4 different workstations, each equipped with an Intel Core i7 processor and an NVIDIA GTX 1080 graphics card.}
  \label{tab:simulationTimes}
\end{table*}

\subsection{Tabular Pegs on Disks}
\label{sec:tabularDomain}

Here we analyze the HSA approach applied to a simple, tabular domain, where the number of states and actions is finite. The domain consists of 2 types of objects -- pegs and disks -- which are situated on a 3D grid (Fig.~\ref{fig:tabularPegsOnDisks}). The robot can move its e.e. to a location on the grid and open/close its gripper. The goal is for the robot to place all the pegs onto disks.

\begin{figure}[ht]
  \centering
  \includegraphics[width=0.50\linewidth]{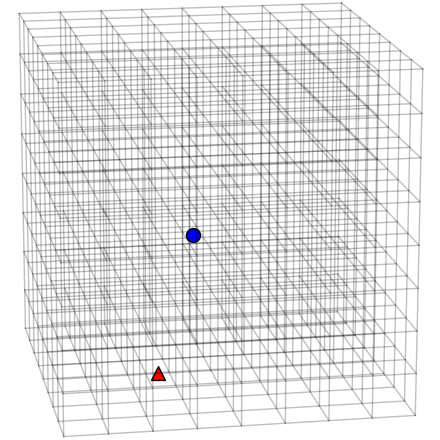}
  \caption{Tabular pegs on disks with an $8\times 8 \times 8$ grid, 1 peg (red triangle), and 1 disk (blue circle).}
  \label{fig:tabularPegsOnDisks}
\end{figure}

If this problem is described as a finite MDP, eventual convergence to the optimal policy is guaranteed for standard RL algorithms \cite{Watkins1992,Jaakkola1994}. However, the number of state-action pairs is too large for practical implementation unless some abstraction is applied. The main question addressed here is if convergence guarantees are maintained with the HSA abstraction.

\subsubsection{Ground MDP}
\label{sec:tabularGroundMdp}

Tabular pegs on disks is first described without the sense-move-effect abstraction.
\begin{itemize}[leftmargin=*]

\item \textbf{State.} A set of pegs $P = \{p_1, \dots, p_n\}$, a set of disks $D = \{d_1, \dots, d_n\}$, and the current time $t \in \{1, \dots, t_\mathit{max}\}$. A peg (resp. disk) is a location $(x,y,z) \in \{1, \dots, m\}^3$ except peg locations are augmented with a special in-hand location $h$. Pegs (resp. disks) cannot occupy the same location at the same time, but 1 peg and 1 disk can occupy the same location at the same time.

\item \textbf{Action.} $\mathit{move\dash effect}(x, y, z)$, which moves the e.e. to $(x,y,z) \in \{1, \dots, m\}^3$ and opens/closes. It opens if a peg is located at $h$ and closes otherwise.

\item\textbf{Transition.} $t$ increments by 1. If no peg is at $h$ and a peg $p$ is at the action location, then the peg is grasped ($p = h$). If a peg is located at $h$ and the action location $a$ does not contain a peg, the peg is placed ($p = a$). Otherwise, the state remains unchanged.

\item \textbf{Reward.} 1 if a peg is placed on an unoccupied disk, -1 if a placed peg is removed, and 0 otherwise.

\end{itemize}

Initially, pegs and disks are at distinct locations, and no peg is in the e.e. The time horizon is $t_\text{max} = 2n$, where there is enough time to grasp and place each peg. This MDP satisfies the Markov property because the next state is completely determined from the current state and action. The number of possible states is shown in Eq.~\ref{eq:tabularGroundMdp}, and the number of actions is $|A| = m^3$. It is not practical to learn the optimal policy by enumerating all state-action pairs for this MDP: for example, if $m=16$ and $n=3$, the state-action value lookup table size is on the order of $10^{24}$.

\begin{equation}
\label{eq:tabularGroundMdp}
|S| = {m^3 + 1\choose n}{m^3 \choose n}t_\mathit{max}
\end{equation}

\subsubsection{Sense-Move-Effect MDP}
\label{sec:tabularAbstractMdp}

We apply the sense-move-effect abstraction of Section~\ref{sec:senseMoveEffect} and HSA constraints of Section~\ref{sec:hsa} to the tabular pegs on disks problem. The process is illustrated in Fig.~\ref{fig:tabularHsa}. At level 1, the sensor perceives the entire $m^3$ grid as 8 cells, each cell summarizing the contents of an octant of space in the underlying grid. The robot then selects one of these cells to attend to. At levels $2, \dots, L-1$, the sensor perceives 8 cells revealing more detail of the octant selected in the previous level. At level $L$, the sensor perceives 8 cells in the underlying grid, and the location of the underlying action is determined by the cell selected here. Without loss of generality, assume the grid size $m$ of the ground MDP is a power of 2 and the number of levels $L$ is $\log_2(m)$.

\begin{figure}[ht]
  \centering
  \includegraphics[width=0.32\linewidth]{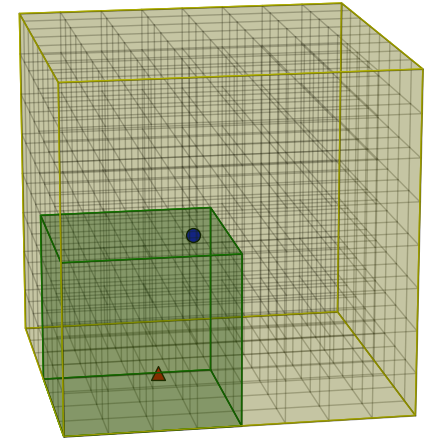}
  \includegraphics[width=0.32\linewidth]{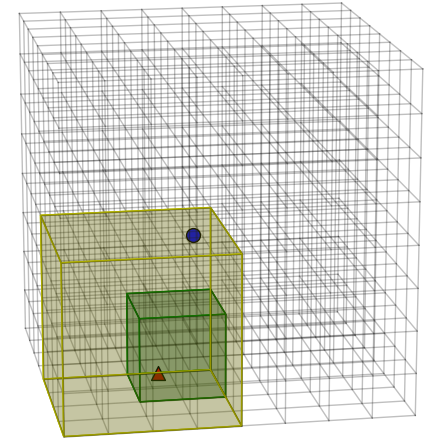}
  \includegraphics[width=0.32\linewidth]{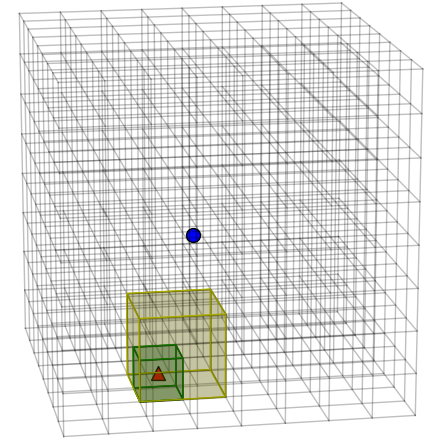}
  \includegraphics[width=0.32\linewidth]{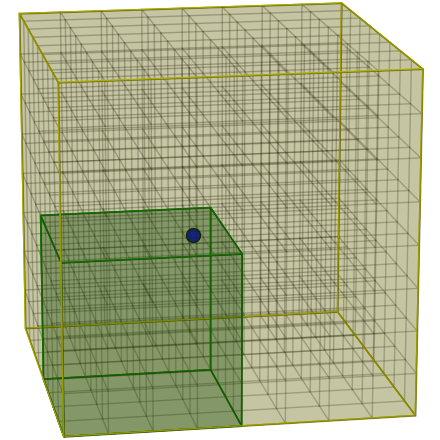}
  \includegraphics[width=0.32\linewidth]{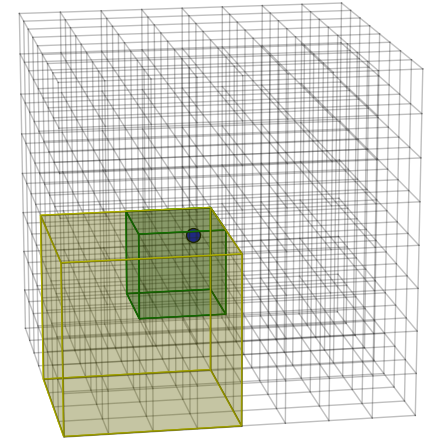}
  \includegraphics[width=0.32\linewidth]{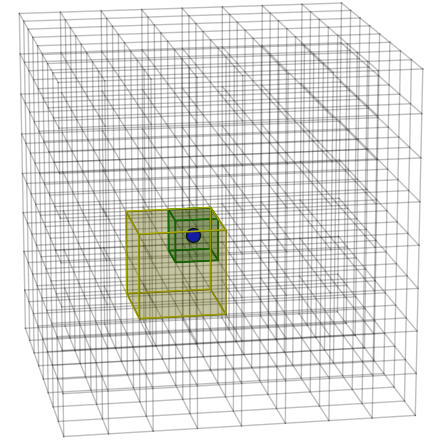}
  \caption{HSA applied to the grid in Fig.~\ref{fig:tabularPegsOnDisks}. \textbf{Columns} correspond to levels 1, 2, and 3. The observed volume appears yellow, and the octant selected by the robot appears green. \textbf{Top}. Robot selects the peg and is holding it afterward. \textbf{Bottom}. Robot selects the disk.}
  \label{fig:tabularHsa}
\end{figure}

\begin{itemize}[leftmargin=*]

\item \textbf{State.} The current level $l \in \{1, \dots, L\}$, the overt time step $t \in \{1, \dots, t_\mathit{max}\}$, a bit $h \in \{0, 1\}$ indicating if a peg is held, and the tuple $G = (G_p, G_d, G_\mathit{pd}, G_{e})$ where each $G_i \in \{0,1\}^8$. $G_p$ indicates the presence of unplaced pegs, $G_d$ unoccupied disks, $G_\mathit{pd}$ placed pegs, and $G_e$ empty space.

\item \textbf{Action.} The action is $a \in \{ 1, \dots, 8\}$, a location in the observed grids.

\item \textbf{Transition.} For levels $l=1, \dots, L-1$, the robot selects a cell in $G$ which corresponds to some partition of space in the underlying grid. The sensor perceives this part of the underlying grid and generates the observation at level $l+1$. For level $L$, the $L$ selections determine the location of the underlying $\mathit{move\dash effect}$ action, $l$ is reset to 1, and otherwise the transition is the same as in the ground MDP.

\item \textbf{Reward.} The reward is 0 for levels $1, \dots, L-1$. Otherwise, the reward is the same as for the ground MDP.

\end{itemize}

The above process is no longer Markov because a history of states and actions could be used to better predict the next state. For instance, for a sufficiently long random walk, the exact location of all pegs and disks could be determined from the history of observations, and the underlying grid could be reconstructed.

On the other hand, this abstraction results in substantial savings in terms of the number of states (Eq.~\ref{eq:tabularSenseMoveEffectMdp}) and actions ($|A|=8$). The only nonconstant term (besides $t_\mathit{max}$) is logarithmic in $m$. Referring to the earlier example with $m=16$ and $n=3$, the state-action lookup table size is the order of $10^{11}$.

\begin{equation}
\label{eq:tabularSenseMoveEffectMdp}
|S| \leq 2^{33} \log_2(m) t_\mathit{max} 
\end{equation}

\subsubsection{Theoretical Results}
\label{sec:tabularTheoreticalResults}

The sense-move-effect MDP with HSA constraints can be classified according to the state abstraction ordering defined in Li et al. \cite{Li2006}. In particular, we show $Q^*$-irrelevance, which is sufficient for the convergence of a number of RL algorithms, including Q-learning, to a policy optimal in the ground MDP.

\begin{definition}[$Q^*$-irrelevance Abstraction \cite{Li2006}]
Given an MDP $M = \langle S, A, P, R, \gamma \rangle$, any states $s_1, s_2 \in S$, and an arbitrary but fixed weighting function $w(s)$, a $Q^*$\emph{-irrelevance abstraction}  $\phi_{Q^*}$ is such that for any action $a$, $\phi_{Q^*}(s_1) = \phi_{Q^*}(s_2)$ implies $Q^*(s_1, a) = Q^*(s_2, a)$.
\end{definition}

\noindent $\phi_{Q^*}$ is a mapping from ground states to abstract states and defines the abstract MDP.\footnote{Although the definition is for infinite-horizon problems (due to $\gamma$), our finite-horizon problem readily converts to an infinite-horizon problem by adding an absorbing state that is reached after $t_\mathit{max}$ overt stages. The weight $w(s)$ is the probability the underlying state is $s$ given its abstract state $\phi(s)$ is observed. Any fixed policy, e.g. $\epsilon$-greedy with fixed $\epsilon$, induces a valid $w(s)$ and satisfies the definition.}

\begin{theorem}[Convergence of Q-learning under $Q^*$-irrelevance \cite{Li2006}]
Assume that each state-action pair is visited infinitely often and the step-size parameters decay appropriately. Q-learning with abstraction $\phi_{Q^*}$ converges to the optimal state-action value function in the ground MDP. Therefore, the resulting optimal abstract policy is also optimal in the ground MDP.
\end{theorem}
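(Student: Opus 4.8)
The statement is the convergence result of Li et al.~\cite{Li2006}, so the plan is to assemble it from two standard pieces: the stochastic-approximation convergence of tabular Q-learning on a finite MDP, and a consistency lemma asserting that the optimal Q-function of the abstract MDP, pulled back through $\phi_{Q^*}$, coincides with the ground optimal Q-function $Q^*$. First I would make the abstract MDP $\bar M$ explicit. Using the weighting $w(s)$ (the probability of ground state $s$ conditioned on its abstract class, per the footnote), define the abstract reward $\bar R(\bar s, a) = \sum_{s:\phi(s)=\bar s} w(s) R(s,a)$ and abstract transition $\bar P(\bar s'\mid \bar s, a) = \sum_{s:\phi(s)=\bar s} w(s) \sum_{s':\phi(s')=\bar s'} P(s'\mid s,a)$. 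This is a well-defined finite MDP whose Bellman optimality operator is a $\gamma$-contraction with a unique fixed point $Q^*_{\bar M}$.

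The crux is the consistency lemma. By $Q^*$-irrelevance, $Q^*(s,a)$ depends on $s$ only through $\phi(s)$, so I can unambiguously set $\tilde Q(\bar s, a) := Q^*(s,a)$ for any representative $s$ with $\phi(s)=\bar s$. I would then verify that $\tilde Q$ solves the abstract Bellman optimality equation. Writing the ground equation $Q^*(s,a) = R(s,a) + \gamma \sum_{s'} P(s'\mid s,a)\max_{a'} \tilde Q(\phi(s'),a')$ for each $s$ in the class $\bar s$ and taking the $w$-weighted average over that class, the left side collapses to $\tilde Q(\bar s,a)$ precisely because $Q^*$ is constant on the class, while the right side regroups into $\bar R(\bar s,a) + \gamma \sum_{\bar s'} \bar P(\bar s'\mid \bar s,a)\max_{a'}\tilde Q(\bar s',a')$. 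Hence $\tilde Q$ is a fixed point of the abstract operator, and by uniqueness $\tilde Q = Q^*_{\bar M}$.

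With the lemma in hand the rest is routine. The stated hypotheses -- every abstract state-action pair visited infinitely often and Robbins--Monro step sizes -- are exactly those of the classical tabular Q-learning convergence theorems~\cite{Watkins1992,Jaakkola1994}, so Q-learning run on $\bar M$ converges almost surely to $Q^*_{\bar M} = \tilde Q$. Finally, the greedy abstract policy selects at each $\phi(s)$ an action in $\arg\max_a \tilde Q(\phi(s),a) = \arg\max_a Q^*(s,a)$, which is a ground-optimal action; therefore the learned abstract policy is optimal in the ground MDP. The finite-horizon setting is accommodated by the absorbing-state reduction noted after the definition, so the discounted contraction arguments apply verbatim.

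I expect the consistency lemma to be the only real obstacle: one must check that averaging the ground Bellman equation with $w$ does not distort the backup. This works only because $Q^*$ is invariant across each abstract class, so the averaging is inert on the left and merely assembles $\bar R$ and $\bar P$ on the right; for a coarser abstraction such as $\phi_{\pi^*}$ the same averaging would mix unequal $Q^*$ values and the fixed point would drift away from $Q^*$. A secondary point to handle is that infinite visitation of every abstract state-action pair must be guaranteed, which holds because every abstract state has a ground preimage, so infinite visitation of the ground pairs forces infinite visitation of the corresponding abstract pairs under any exploratory policy such as $\epsilon$-greedy.
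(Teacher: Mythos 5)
The paper does not actually prove this theorem: it is imported verbatim from Li et al.\ \cite{Li2006}, and the paper's only original proof work is for Theorem~\ref{th:qStarIrrelevance} (that $\phi_\mathit{SME}$ satisfies the hypothesis of this theorem). So there is no in-paper proof to compare against; your proposal must be judged against the cited source, and it does reconstruct that argument correctly and along the standard lines: (i) build the abstract MDP from the weighting $w$, (ii) prove the consistency lemma that the pull-back of $Q^*$ is a fixed point of the abstract Bellman optimality operator -- which works precisely because $Q^*$ is constant on each abstract class, so the $w$-average is inert on the left-hand side -- and (iii) invoke classical asynchronous Q-learning convergence \cite{Watkins1992,Jaakkola1994}. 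Your closing remark that this breaks for coarser abstractions such as $\phi_{\pi^*}$ is exactly the distinction Li et al.\ draw. Two refinements are worth noting. First, you fix $w$ via a fixed behavior policy (as the paper's footnote also does), but the stronger and more honest observation is that your consistency lemma holds for \emph{every} weighting $w$: all the induced abstract operators are $\gamma$-contractions sharing the same fixed point (the pulled-back $Q^*$), which is what makes the convergence robust even though the within-class visitation distribution of a learning agent is not truly stationary. Second, ``the discounted contraction arguments apply verbatim'' is slightly glib for the paper's setting, where the finite-horizon problem is converted via an absorbing state and effectively $\gamma = 1$; there uniqueness of the fixed point should instead be argued by backward induction on the stage (time is part of the state), or by taking $\gamma < 1$ in the converted problem. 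Neither point undermines the proposal; both are patchable in a line or two.
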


Because Li et al. do not consider action abstractions, we redefine the ground MDP to have the same actions as sense-move-effect MDP. Additionally, to keep the ground MDP Markov, we add the current level $l$, and the current point of focus $v \in \{1,\dots,m\}^3$, to the state. This does not essentially change the tabular pegs on disks domain but merely allows us to rigorously make the following connection.

Let states and actions of the ground MDP be denoted by $s$ and $a$ respectively. Similarly, let states and actions of the sense-move-effect MDP be denoted by $\bar s$ and $\bar a$ respectively. Let $\phi_\mathit{SME}: S \to \bar S$ be the observation function.

\begin{theorem}[$\phi_\mathit{SME}$ is $Q^*$-irrelevant] The sense-move-effect abstraction, $\phi_\mathit{SME}$, is a $Q^*$-irrelevance abstraction.
\label{th:qStarIrrelevance}
\end{theorem}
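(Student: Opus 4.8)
The plan is to prove the stronger claim that the optimal value function of the redefined (Markov) ground MDP factors through the observation map: there is a function $\bar Q^*$ on abstract state--action pairs with $Q^*(s,a) = \bar Q^*(\phi_\mathit{SME}(s), a)$ for every ground state $s$ and action $a$. The theorem follows immediately, since $\phi_\mathit{SME}(s_1) = \phi_\mathit{SME}(s_2)$ then forces $Q^*(s_1,a) = \bar Q^*(\phi_\mathit{SME}(s_1),a) = \bar Q^*(\phi_\mathit{SME}(s_2),a) = Q^*(s_2,a)$. I would establish the factorization by backward induction over the pair (overt time $t$, sensing level $l$) using the Bellman optimality equation. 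The first step is to describe the fibers of $\phi_\mathit{SME}$: two ground states mapping to the same abstract state agree on $l$, $t$, the holding bit $h$, and the occupancy pattern $G$ of the observed window, but may differ in the exact sub-cell positions of objects, in the focus $v$ and everything outside the window, and in the number of pegs or disks hidden behind an occupied cell (since $G$ records only presence).

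Two structural facts about the domain do most of the work, and I would isolate them first. The place/remove reward depends on the chosen cell only through its occupancy type and on $h$, so it is position independent; and the octree navigation can reach every grid location, so a single presence bit is exactly the information needed to decide whether an octant can eventually yield a grasp or a placement. Granting these, the inductive step is routine. At a sensing level $l<L$, selecting cell $a$ deterministically refines the focus into the corresponding octant with zero reward, and whether that octant is ``live'' (contains a graspable peg when $h=0$, or an available disk when $h=1$) is read off from $G_p[a]$ or $G_d[a]$; at the overt level $l=L$ the cell is a single location, so the immediate reward is pinned down by $(h, G, a)$. In both cases the Bellman backup of an abstract-measurable value is again abstract-measurable, closing the induction. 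The remaining invariances --- sub-cell position, focus, and the interchangeability of pegs and of disks --- I would fold in by observing that relabeling identical pegs, identical disks, and grid locations consistently with the octree is an automorphism of the ground MDP that fixes the action set, so $Q^*$ is automatically constant on such orbits.

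The step I expect to be the main obstacle is the dependence on the exact \emph{counts} of pegs and disks, which $G$ discards. The key lemma I would prove is that, from any state reachable under the fixed behavior policy that defines the weighting $w$, optimal play places every still-placeable peg, so the optimal value at the start of an overt step equals the number of placements achievable in the remaining horizon --- a quantity depending only on $t$ and $h$. I would prove this by exhibiting the invariant $\Phi = (u+h) - T_t(h) \ge 0$, where $u$ is the number of unplaced pegs, $T_t(h)$ is the time-limited placement count, and $u+h = n-p$ (pegs left to place equals free disks) follows from conservation of the $n$ pegs; checking each transition shows $\Phi$ starts at $0$ and is nondecreasing, since the only move that lowers $u+h$ is a placement, which lowers $T_t(h)$ by the same amount, while every other move leaves $u+h$ fixed and can only lower $T_t(h)$ (here $t_\mathit{max} = 2n$ is what makes the counts and the horizon commensurate). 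Hence on reachable states the count of remaining pegs always dominates the time-limited count and drops out of the value, leaving it a function of abstract-observable quantities. I would be careful to restrict the value-matching claim to the support of $w$, since on unreachable states endowed with an artificial surplus of time the value genuinely depends on the counts.
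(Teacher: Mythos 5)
Your proposal is correct and takes essentially the same route as the paper's (much terser) proof: the paper likewise argues that the reward of the current overt stage and the resulting e.e. status are determined by $(\bar s, \bar a)$, and that, because $t_\mathit{max}=2n$ and all pegs are initially unplaced, the optimal future return depends only on the time remaining and whether a peg will be held. Your explicit invariant $\Phi = (u+h) - T_t(h) \ge 0$ and your restriction of the value-matching claim to reachable states simply make rigorous what the paper compresses into its final sentence.
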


\begin{proof}
$Q^*(s,a)$ can be computed from $\bar s$ and $\bar a$. The reward after the current overt stage $t$ depends on $h$, whether or not it is possible to select a peg/disk, and whether or not it is possible to avoid selecting a placed peg. These are known from $\bar s$ and $\bar a$. Furthermore, whether or not a peg will be held after the current stage can be determined from $\bar s$ and $\bar a$. Finally, due to $t_\mathit{max}=2n$ and the fact that all pegs are initially unplaced, the sum of future rewards following an optimal policy from the current stage depends only on (a) whether or not a peg will be held after the current stage and (b) the amount of time left, $t-1$.
\end{proof}

\subsubsection{Simulation Results}
\label{sec:tabularSimulationResults}


In these experiments, there were $n = 3$ objects, and the grid size was $m = 16$. Besides Deictic Image Mapping (where $L=1$), the number of levels was $L=4$. A comparison with no abstraction or HSA with $L=1$ was not possible because the system quickly ran out of memory (Eq.~\ref{eq:tabularGroundMdp}). The learning algorithm was Sarsa \cite{Rummery1994}, and actions were taken greedily w.r.t. the current Q-estimate. An optimistic initialization of action-values and random tie-breaking were relied on for exploration.


The proof to Theorem~\ref{th:qStarIrrelevance} suggests the observability of pegs, disks, placed pegs, and empty space are all important for learning the optimal policy. On the other hand, we empirically found no disadvantage to removing the $G_\mathit{pd}$ (placed pegs) and $G_e$ (empty space) grids. However, it is important to distinguish unplaced pegs and placed pegs. Fig.~\ref{fig:tabularPegsOnDisksFaultySensor} shows learning curves for an HSA agent with $G_p$ and $G_d$ grids versus an HSA agent with the same grids but showing pegs/disks irregardless of whether or not they are placed/occupied.

\begin{figure}[th]
  \centering
  \includegraphics[width=\linewidth]{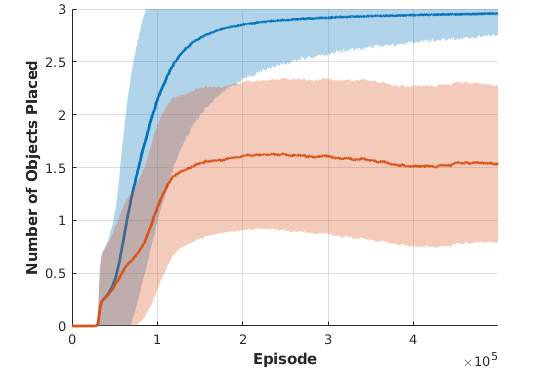}
  \caption{Number of objects placed for the standard HSA agent (blue) and a standard HSA agent with a faulty sensor (red). Curves are first mean and $\pm \sigma$ over each episode in 30 realizations, then averaged over $1,000$-epsisode segments for visualization.}
  \label{fig:tabularPegsOnDisksFaultySensor}
\end{figure}


Lookahead HSA and Deictic Image Mapping variants (Section~\ref{sec:lookahead} and~\ref{sec:otherApproaches}) result in an even smaller state-action space than standard HSA. In the tabular domain, this means faster convergence (Fig.~\ref{fig:tabularPegsOnDisksStandardLookaheadDeictic}). Although the deictic representation seems superior in these results, it has a serious drawback. The action-selection time scales linearly with $m^3$ because there is one action for each cell in the underlying grid. The lookahead variant captures the best of both worlds -- small representation and fast execution. Thus, in the tabular domain, lookahead appears to be the satisfactory middle ground between the two approaches. However, for more complex domains, where Q-function approximation is required, the constant time needed to generate the action images becomes more significant, and the advantage of lookahead in terms of episodes to train diminishes (Section~\ref{sec:uprightPegsOnDisks}).

\begin{figure}[th]
  \centering
  \includegraphics[width=\linewidth]{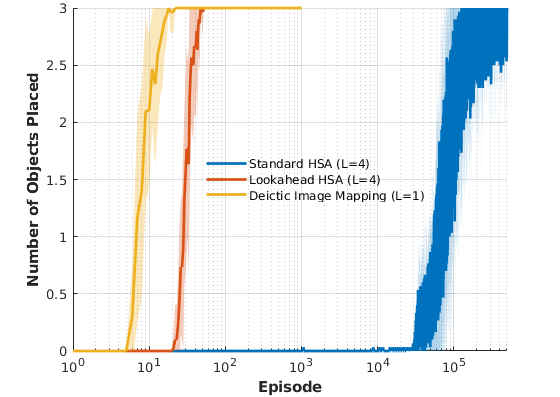}
  \caption{Number of objects placed for standard HSA (blue), lookahead HSA (red), and Deictic Image Mapping (yellow) agents. Curves are mean (solid) and $\pm \sigma$ (shaded) over 30 realizations. Plot in log scale for lookahead and deictic results to be visible.}
  \label{fig:tabularPegsOnDisksStandardLookaheadDeictic}
\end{figure}

\subsection{Upright Pegs on Disks}
\label{sec:uprightPegsOnDisks}

In this domain, pegs and disks are modeled as tall and flat cylinders, respectively, where the cylinder axis is always vertical (Fig.~\ref{fig:pegsOnDisksUpright}, left). Unlike the tabular domain, object size and position are sampled from a continuous space. Grasp and place success are checked with a set of simple conditions appropriate for upright cylinders.\footnote{Grasp conditions: gripper is collision-free and the top-center of exactly 1 cylinder is in the gripper's closing region. Place conditions: entire cylinder is above an unoccupied disk and the cylinder bottom is at most 1 cm below or 2 cm above the disk surface.} The reward is 1 for grasping an unplaced peg, -1 for grasping a placed peg, 1 for placing a peg on an unoccupied disk, and 0 otherwise.

Observations consist of 1 or 2 images ($k=2$, $n_\mathit{ch}=1$, $n_x=n_y=64$); the current HSA level, $l \in \{1,2,3\}$; and the e.e. status, $h \in \{\mathit{empty}, \mathit{holding}\}$. Each HSA level selects $(x,y,z)$ position (Fig.~\ref{fig:pegsOnDisksUpright}, right). Gripper orientation is not critical for this problem.

\begin{figure}[th]
  \centering
  \includegraphics[width=0.235\linewidth]{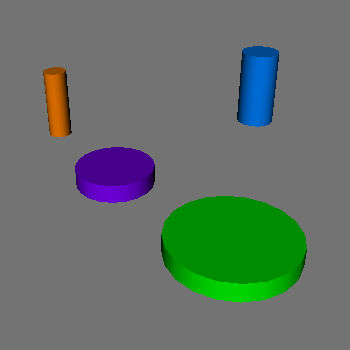}
  \includegraphics[width=0.235\linewidth]{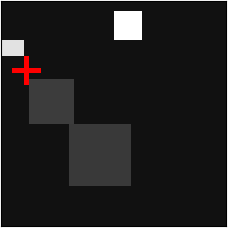}
  \includegraphics[width=0.235\linewidth]{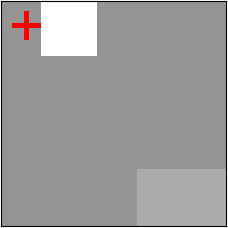}
  \includegraphics[width=0.235\linewidth]{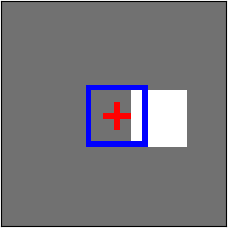}
  \caption{\textbf{Left}. Example upright pegs on disks scene. \textbf{Right}. Level 1, 2, and 3 images for grasping the orange peg. Red cross denotes the $(x,y)$ position selected by the robot and the blue rectangle denotes the allowed $(x,y)$ offset. $z_x=z_y=36$ cm$^2$ for level 1 and $9$ cm$^2$ for levels 2 and 3. $d_x=d_y=36$ cm$^2$ for level 1, $9$ cm$^2$ for level 2, and $2.25$ cm$^2$ for level 3. Pixel values normalized and height selection not shown for improved visualization.}
  \label{fig:pegsOnDisksUpright}
\end{figure}

\subsubsection{Network Architecture and Algorithm}

The Q-function consists of 6 convolutional neural networks (CNNs), 1 for each level and e.e. status, with identical architecture (Table~\ref{tab:uprightPegsOnDisksCnn}). This architecture results in faster execution time compared with our previous version \cite{Gualtieri2018B}. The loss is the squared difference between predicted and actual action-value target, averaged over a mini-batch. The action-value target is the reward received at the end of the current overt stage.%
\footnote{With standard MC and $\gamma=1$, the action-value target would be the sum of rewards received after the current time step \cite{Sutton2018}. Since, for this problem, no positively rewarding grasp precludes a positively rewarding place, ignoring rewards after the current overt stage is acceptable.} For CNN optimization, Adam \cite{Kingma2015} is used with a base learning rate of $0.0001$, weight decay of $0.0001$, and mini-batch size of 64.

\begin{table}[ht]
  \centering
  \begin{tabular}{|c|c|c|c|}
  \hline
  \textbf{layer} & \textbf{kernel size} & \textbf{stride} & \textbf{output size}\\
  \hline
  conv-1 & $7\times 7$ & 2 & $32\times 32\times 32$\\
  \hline
  conv-2 & $7\times 7$ & 2 & $16\times 16\times 64$\\
  \hline
  conv-3 & $7\times 7$ & 2 & $8\times 8\times 32$\\
  \hline
  conv-4 & $7\times 7$ & 2 & $4\times 4\times 32$\\
  \hline
  conv-5 & $7\times 7$ & 1 & $4\times 4\times 4$\\
  \hline
  \end{tabular}
  \caption{CNN architecture for the upright pegs on disks domain. Each layer besides conv-4 and conv-5 has a rectified linear unit (ReLU) as the activation.}
  \label{tab:uprightPegsOnDisksCnn}
\end{table}

\subsubsection{Simulation Results}


We tested standard HSA with 1, 2, and 3 levels. The number of actions (CNN outputs) per level was adjusted so that each case had the same 5.625 mm precision in positioning of the e.e.: 1 level had $4^9$ outputs, 2 levels $4^3$ outputs and $4^6$ outputs, and 3 levels each had $4^3$ outputs. Note that with 1 level this is the DQN (i.e. no-hierarchy) approach (Section~\ref{sec:otherApproaches}). Exploration was $\epsilon$-greedy with $\epsilon = 0.04$.

Results are shown in Fig.~\ref{fig:pegsOnDisksUpright-123Levels}. The 1 level case is faster in terms of episodes because learning is over fewer time steps. The 2 levels case initially learns faster for the same reason. The 1 and 2 level cases converge to higher values because, with 3 levels, there is a higher chance of taking a random action during an overt stage. This is because more levels imply more time steps over which a random action could be selected w.p. $\epsilon$. What is important is that, in the last $5,000$ episodes when $\epsilon=0$, all scenarios have similar performance. However, HSA trains faster than DQN in terms of wall clock time (1.29 versus 2.55 hours) because fewer actions need evaluated ($192$ versus $262,144$). This advantage becomes more staggering as dimensionality of the action space increases, as in following sections.

\begin{figure}[th]
  \centering
  \includegraphics[width=\linewidth]{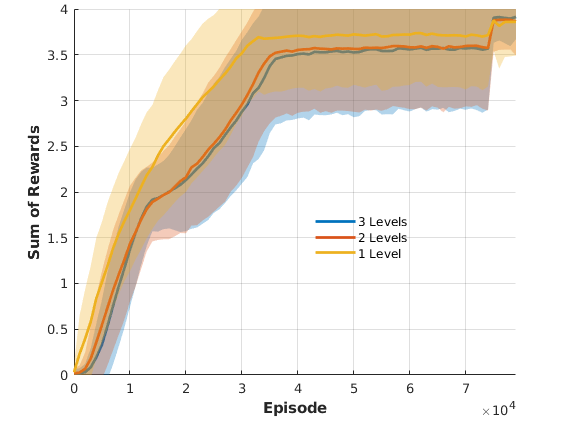}
  \caption{Standard HSA with varying number of levels. (Blue) $L=3$, (red) $L=2$, and (yellow) $L=1$. Curves are mean $\pm\sigma$ over 10 realizations then averaged over $1,000$ episode segments.}
  \label{fig:pegsOnDisksUpright-123Levels}
\end{figure}


In another experiment we tested the sensitivity of standard HSA to the choice of $z$ and $d$ parameters. As explained in Section~\ref{sec:parameters}, these parameters are selected based on task geometry. If $z$ (resp. $d$) is too small, parts of the workspace will not be perceivable (resp. reachable). On the other hand, if $z$ is too large, the scene will not be visible in detail (because the perceived images are of fixed resolution), and if $d$ is too large, the samples at the last level will not be dense, resulting in low e.e. precision. Results for different values of $z$ and $d$ are shown in Table~\ref{tab:hsaParameters}. The ``ideal'' values are those selected according to the principles in Section~\ref{sec:parameters} and correspond to the 3-levels case in Fig.~\ref{fig:pegsOnDisksUpright-123Levels}. As expected, performance is much worse when selecting $z$ and $d$ without consideration to task geometry.

\begin{table}[ht]
  \centering
  \begin{tabular}{|c|c|c|c|}
  \hline
  & \textbf{Small} & \textbf{Ideal} & \textbf{Large}\\
  \hline
  level-1, $z_{xy} =$ & 36.0 & 36.0 & 36.0\\
  \hline
  level-1, $d_{xy} =$ & 36.0 & 36.0 & 36.0\\
  \hline
  level-2, $z_{xy} =$ & 6.00 & 9.00 & 12.00\\
  \hline
  level-2, $d_{xy} = $ & 6.00 & 9.00 & 12.00\\
  \hline
  levle-3, $z_{xy} = $ & 6.00 & 9.00 & 12.00\\
  \hline
  level-3, $d_{xy} = $ & 1.50 & 2.25 & 3.00\\
  \hline
  $\mu$ Return & 2.69 & 3.91 & 2.83\\
  \hline
  $\sigma$ Return & 1.32 & 0.01 & 1.75\\
  \hline
  \end{tabular}
  \caption{Varying standard HSA parameters $z_{xy}$ and $d_{xy}$ (in cm). ``Ideal'' values were selected according to Section~\ref{sec:parameters}. ``Small'' (resp. ``Large'') values are smaller (resp. larger) than ideal. Last 2 rows are average and standard deviation over sum of rewards per episode, after 10 different training sessions and $1,000$ episodes per session.}
  \label{tab:hsaParameters}
\end{table}


We also compared standard HSA to lookahead HSA, both with 3 levels. We did not compare to the Deictic Image Mapping approach (Lookahead HSA with 1 level) because computation of all $4^9$ images was prohibitively expensive. Results are shown in Fig.~\ref{fig:pegsOnDisksUpright-standardVsLookahead}. In contrast to the tabular results, both scenarios perform similarly. We hypothesize that the advantage of lookahead HSA is lost due to the equivariance property of CNNs. Since execution time for standard HSA is less than half that of lookahead (1.29 versus 3.67 hours), from now on we only consider standard HSA.

\begin{figure}[th]
  \centering
  \includegraphics[width=\linewidth]{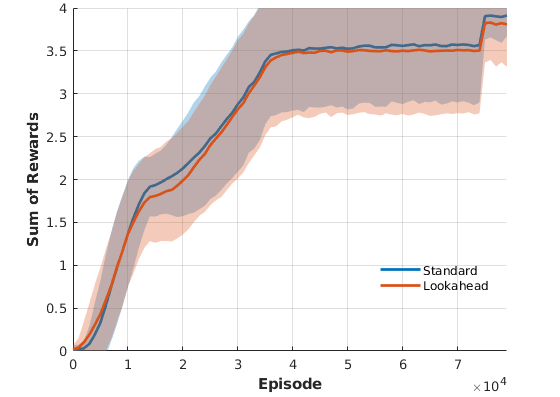}
  \caption{Standard HSA (blue) versus lookahead HSA (red).}
  \label{fig:pegsOnDisksUpright-standardVsLookahead}
\end{figure}

\subsection{Bottles on Coasters}
\label{sec:bottlesOnCoasters}

The main question addressed here is if HSA can be applied to a practical problem and implemented on a physical robotic system. The bottles on coasters domain is similar to the pegs on disks domain, but now objects have complex shapes and are required to be placed upright.\footnote{Grasp conditions: gripper closing region intersects exactly 1 object and the antipodal condition from \cite{tenPas2017} with $15^\circ$ friction cone. Place conditions: bottle is upright, center of mass (CoM) $(x,y)$ position at least 2 cm inside an unoccupied coaster, and bottom within $\pm 2$ cm of coaster surface.} The reward is $1$ for grasping an unplaced object more than 4 cm from the bottom (placing with bottom grasps is kinematically infeasible in the physical system), $-1$ for grasping a placed object, $1$ for placing a bottle, and 0 otherwise.

Observations are similar to before except now the image resolution is lower ($n_x=n_y=48$), and the overt time step is always input to grasp networks (and never input to place networks). HSA has 3 levels selecting $(x,y,z)$ position and 1 level selecting orientation about the gripper approach axis (Fig.~\ref{fig:hsaExample}).

To achieve the target precision in e.e. pose (3.75 mm position and $6^\circ$ orientation for grasping), DQN (or 1-level HSA) would need to evaluate over 53 million actions. Evaluation was prohibitively expensive with our computing hardware. HSA only needs 404 actions (although we use 708 to achieve redundancy, with little loss in computation time as the evaluation is done in parallel).

\subsubsection{Network Architecture and Algorithm}

The network architecture is shown in Table~\ref{tab:bottlesOnCoastersCnn}. There is 1 network for each HSA level and e.e. status. Weight decay is 0. Q-network targets are the reward after the current overt stage.

\begin{table}[ht]
  \centering
  \begin{tabular}{|c|c|c|c|}
  \hline
  \textbf{layer} & \textbf{kernel size} & \textbf{stride} & \textbf{output size}\\
  \hline
  conv-1 & $8\times 8$ & 2 & $24\times 24\times 64$\\
  \hline
  conv-2 & $4\times 4$ & 2 & $12\times 12\times 64$\\
  \hline
  conv-3 & $3\times 3$ & 2 & $6\times 6\times 64$\\
  \hline
  conv-4 / ip-1 & $2\times 2$ / - & 1 / - & $6^3 / n_\mathit{orient}$\\
  \hline
  \end{tabular}
  \caption{CNN architecture for the bottles on coasters domain. Each layer besides the last has a ReLU activation. The last layer is a convolution layer for levels 1-3 (selecting position) and an inner product (IP) layer for level 4 (selecting orientation). $n_\mathit{orient} = 60$ for grasp networks and $n_\mathit{orient} = 3$ for place networks.}
  \label{tab:bottlesOnCoastersCnn}
\end{table}

\subsubsection{Simulation Results}

70 bottles from 3DNet \cite{Wohlkinger2012} were randomly scaled to height 10-20 cm. Bottles were placed upright with probability $1/3$ and on their sides with probability $2/3$. Learning curves for 2 bottles and 2 coasters are shown in Fig.~\ref{fig:bottlesOnCoasters-simResult}. Performance is lower than that of the upright pegs on disks domain, reflective of the additional problem complexity.

\begin{figure}[th]
  \centering
  \includegraphics[width=\linewidth]{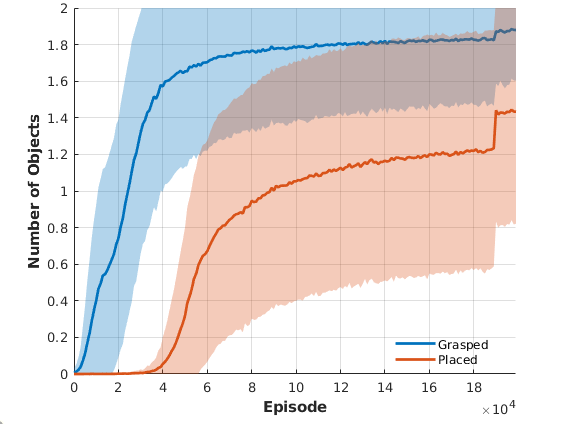}
  \caption{Number of bottles grasped (blue) and placed (red). Curves are mean $\pm\sigma$ over 10 realizations then averaged over $1,000$ episode segments.  Standard HSA with $L=4$.}
  \label{fig:bottlesOnCoasters-simResult}
\end{figure}

To test robustness of the system to background noise, we ran the same experiment with the addition of distractor objects. These distractors are 3 rectangular blocks, with side lengths 1 to 4 cm, scattered randomly in the scene (e.g., Fig.~\ref{fig:bottlesOnCoasters-clutter}, left). Learning performance is only slightly lower (Fig.~\ref{fig:bottlesOnCoasters-clutter}, right). However, if clutter is present at test time, it is important to train the system with clutter. The robot trained without clutter places an average of 1.24 bottles in the cluttered environment (versus 1.55 if trained with clutter). The distractors are visible at some levels (e.g., level 1), so the robot does need to learn to ignore (and avoid collisions with) them.

\begin{figure}[H]
  \centering
  \includegraphics[height=1.4in]{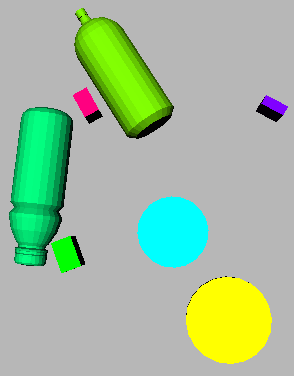}
  \includegraphics[height=1.4in]{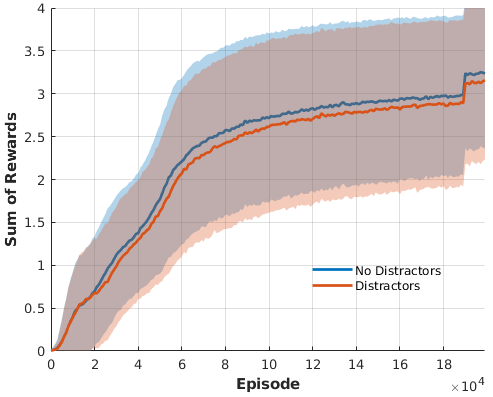}
  \caption{\textbf{Left.} Scene with clutter. \textbf{Right.} Learning curves comparing average sum of rewards when distractors are not present (blue) and present (red).}
  \label{fig:bottlesOnCoasters-clutter}
\end{figure}

\subsubsection{Top-$n$ Sampling}

Before considering experiments on a physical robotic system, we address an important assumption of the move-effect system of Section~\ref{sec:problem}. The assumption is the e.e. can move to any pose, $T_\mathit{ee}$, in the robot's workspace. Recent advances in motion planning algorithms make this a reasonable assumption for the most part; nonetheless, a pose can still sometimes be unreachable due to obstacles, motion planning failure, or IK failure. 

To address this issue, multiple, high-valued actions are sampled from the policy learned in simulation. In particular, for each level $l$ of an overt stage, we take the top-$n$ samples according to Eq.~\ref{eq:successProbability}, where $Q_l$ is the action-value estimate at level $l$, $Q_\mathit{max}$ is the maximum possible action-value, $Q_\mathit{min}$ is the minimum possible action-value, and $p_0 = 1$.

\begin{align}
p_l &= p_{l-1}  \frac{Q_l - Q_\mathit{min}}{Q_\mathit{max} - Q_\mathit{min}}, &l=1, \dots, L
\label{eq:successProbability}
\end{align}

Preliminary tests in simulation showed sampling top-$n$ $p_l$ values performs better than sampling top-$n$ $Q_L$ values, as was done previously \cite{Gualtieri2018B}. Sampling top-$n$ $p_l$ values may be viewed as an ensemble method where each level votes on the final overt action (cf. \cite{Anschel2017}).

During test time, the resulting $n$, $T_\mathit{ee}$ samples are checked for IK and motion plan solution in descending order of $p_L$ value. As $n$ increases, the probability of failing to find a reachable e.e. pose decreases; however, the more poses that are unreachable, the lower the $p_L$ value. Thus, when designing an HSA system, it is important to not over constrain the space of actions.

\subsubsection{Robot Experiments}

We tested the bottles on coasters task with the physical system depicted in Fig.~\ref{fig:system}. The system consists of a Universal Robots 5 (UR5) arm, a Robotiq 85 parallel-jaw gripper, and a Structure depth sensor. The test objects (Fig.\ref{fig:objects}) were not observed during training. The CNN weight files had about average performance out of the 10 realizations (Fig.~\ref{fig:bottlesOnCoasters-simResult}).

\begin{figure}[ht]
  \centering
  \includegraphics[width=0.70\linewidth]{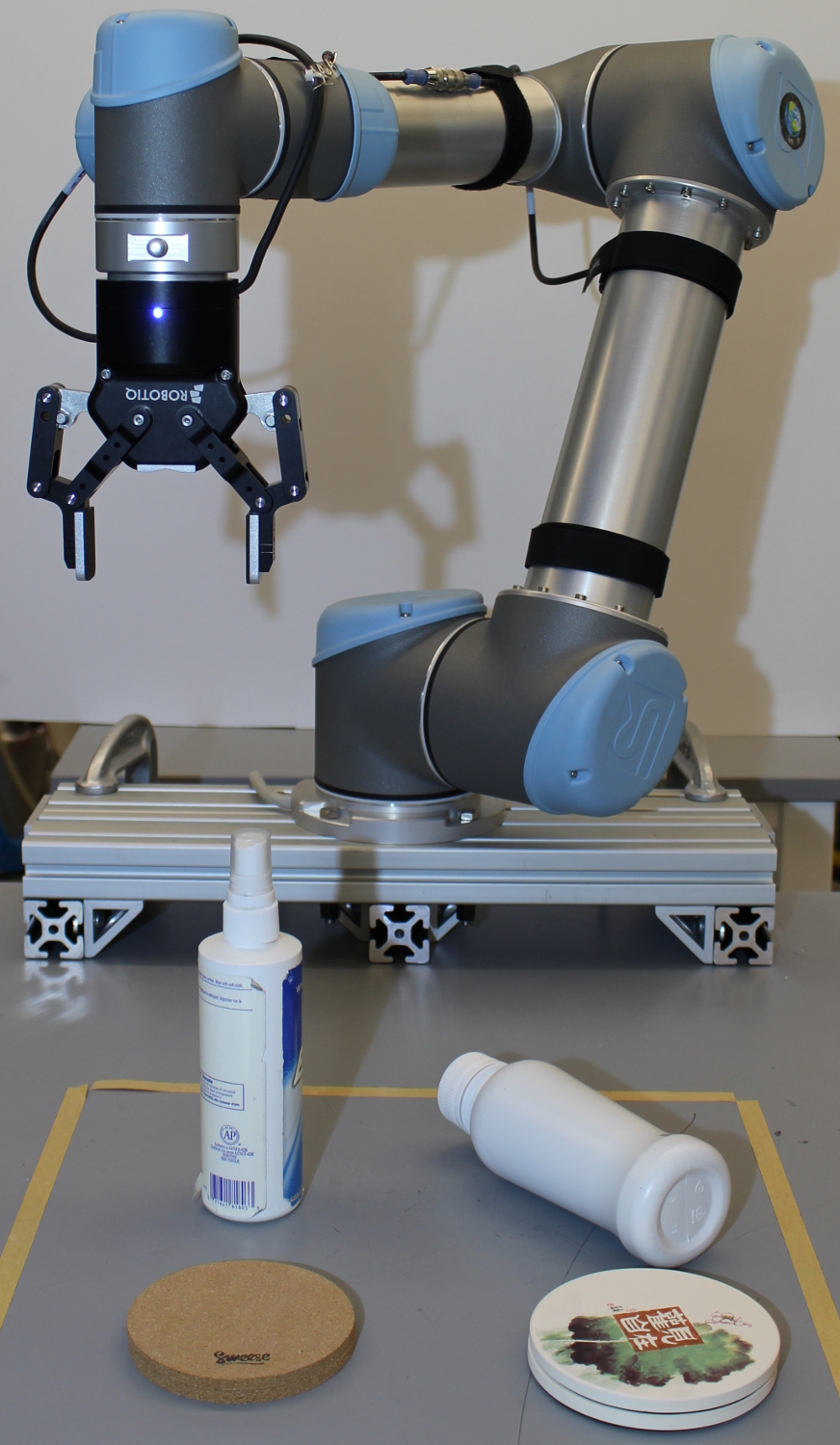}
  \caption{Test setup for bottles on coasters task: a UR5 arm, Robotiq 85 gripper, Structure depth sensor (mounted out of view above the table and looking down), 2 bottles, and 2 coasters.}
  \label{fig:system}
\end{figure}

Initially, 2 coasters were randomly selected and placed in arbitrary positions in the back half of the robot's workspace (too close resulted in unreachable places). Then, 2 bottles were randomly selected and placed upright with probability $1/3$ and on the side with probability $2/3$. The bottles were not allowed to be placed over a coaster.%
\footnote{Python's pseudorandom number generator was used to decide the objects used and upright/side placement. Object position was decided by a human instructed to make the scenes diverse.}
Top-$n$ sampling with $n=200$ was used. A threshold was set for the final grasp/place approach, whereby, if the magnitude of the force on the arm exceeded this threshold, the motion canceled and the open/close action was immediately performed.

\begin{figure}[ht]
  \centering
  \includegraphics[width=0.70\linewidth]{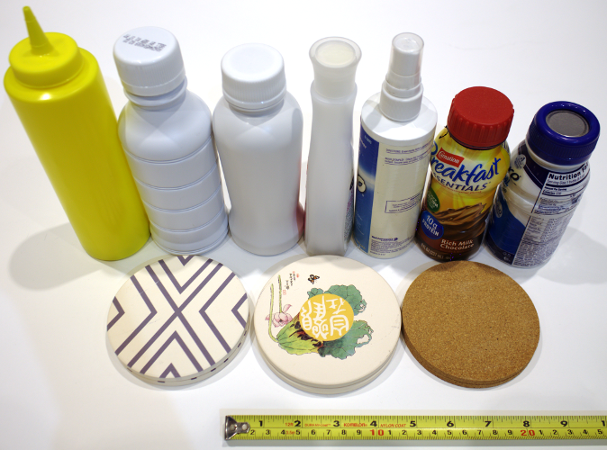}
  \caption{Test objects used in UR5 experiments.}
  \label{fig:objects}
\end{figure}

Results are summarized in Table~\ref{tab:ur5Results}, and a successful sequence is depicted in Fig.~\ref{fig:ur5Episode}. A grasp was considered successful if a bottle was lifted to the ``home'' configuration; a place was considered successful if a bottle was placed upright on an unoccupied coaster and remained there after the gripper withdrew. Failures were: grasped a placed object ($\times3$), placed too close to the edge of a coaster and fell over ($\times 3$), placed upside-down ($\times2$), object slip in hand after grasp caused a place failure ($\times1$), and object fell out of hand after grasp ($\times1$).

\begin{figure*}[ht]
  \centering
  \includegraphics[width=0.24\linewidth]{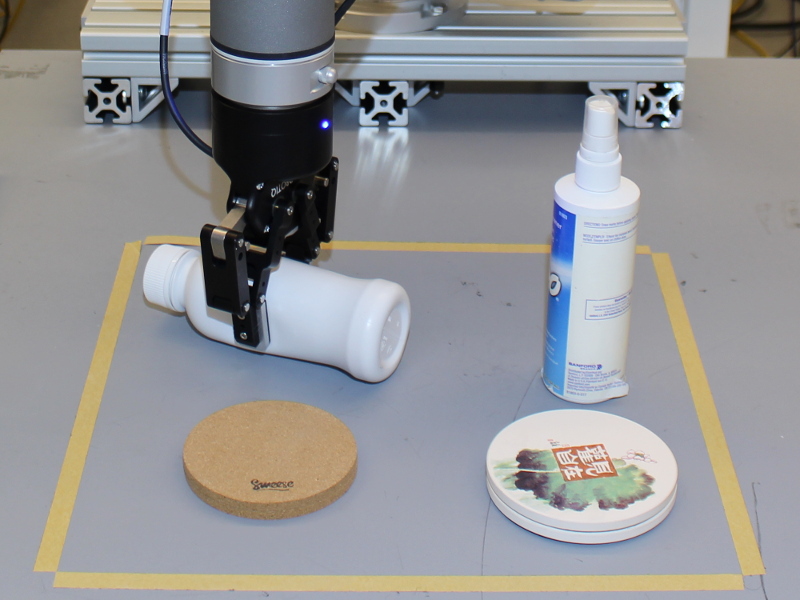}
  \includegraphics[width=0.24\linewidth]{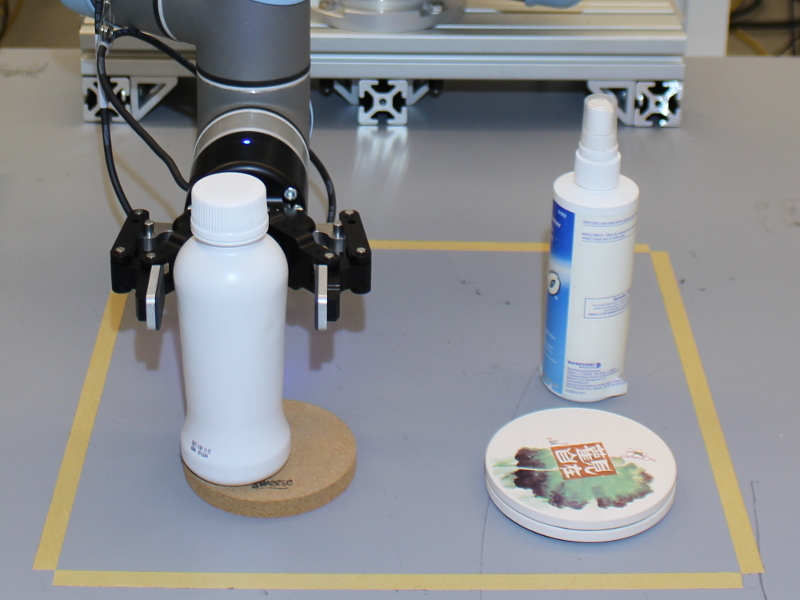}
  \includegraphics[width=0.24\linewidth]{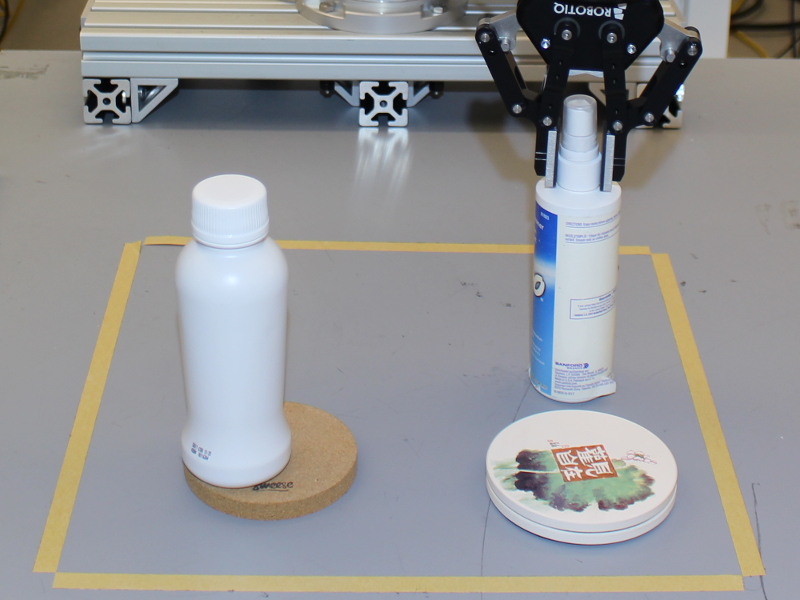}
  \includegraphics[width=0.24\linewidth]{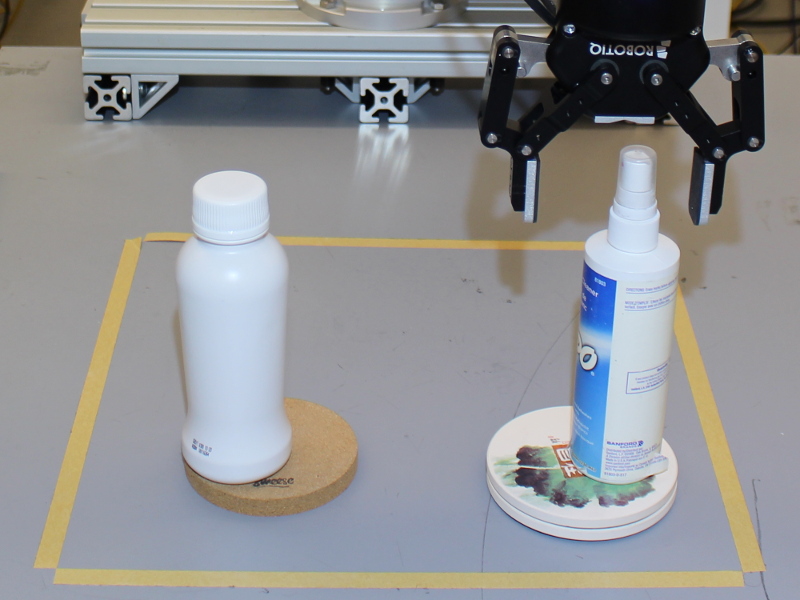}
  \caption{Successful trial -- all bottles placed in 4 overt stages. Image taken immediately after open/close.}
  \label{fig:ur5Episode}
\end{figure*}

\begin{table}[ht]
  \centering
  \begin{tabular}{|c|c|c|}
  \hline
  & Grasp & Place\\
  \hline
  Attempts & 60 & 59\\
  \hline
  Success Rate & 0.98 & 0.90\\
  \hline
  Number of Objects & $1.97 \pm 0.18$ & $1.67 \pm 0.48$ \\
  \hline
  \end{tabular}
  \caption{Performance for UR5 experiments placing 2 bottles on 2 coasters averaged over 30 episodes with $\pm \sigma$. Task success rate with $t_\mathit{max}=4$ was 0.67.}
  \label{tab:ur5Results}
\end{table}

\subsection{6-DoF Pick-Place}
\label{sec:pickPlace}

The HSA method was also implemented for 6-DoF manipulation, and the same system was tested on 3 different pick-place tasks \cite{Gualtieri2018B}.%
\footnote{This section refers to an earlier version of our system, so the simulations took longer and the success rates for bottles are lower. The setup was similar to that in Fig.~\ref{fig:system} except the sensor was mounted to the wrist. See \cite{Gualtieri2018B} for more details.}
The tasks included stacking a block on top of another, placing a mug upright on the table, and (similar to Section~\ref{sec:bottlesOnCoasters}) placing a bottle on a coaster. All tasks included novel objects in light to moderate clutter (Fig.~\ref{fig:6DofPickPlace}). To handle perceptual ambiguities in mugs, the observations were 3-channel images ($k=2$, $n_\mathit{ch}=3$, $n_x=n_y=60$) projected from a point cloud obtained from 2 camera poses. HSA included 6 levels ($L=6$) -- 3 for $(x,y,z)$ position and 1 for each Euler angle. Results from UR5 experiments are shown in Table~\ref{tab:pickPlace}.

\begin{figure}[ht]
  \centering
  \includegraphics[width=0.49\linewidth]{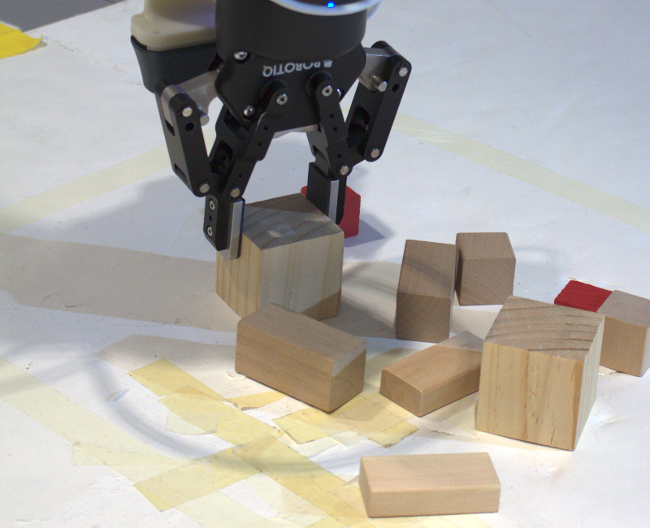}
  \includegraphics[width=0.49\linewidth]{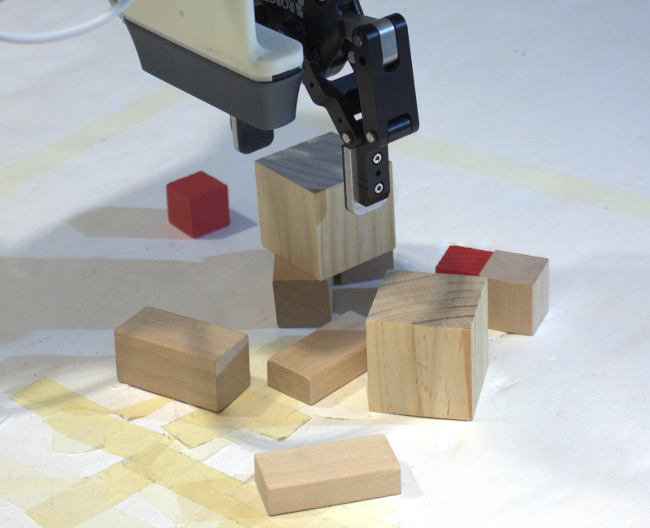}\\
  \smallskip
  \includegraphics[width=0.49\linewidth]{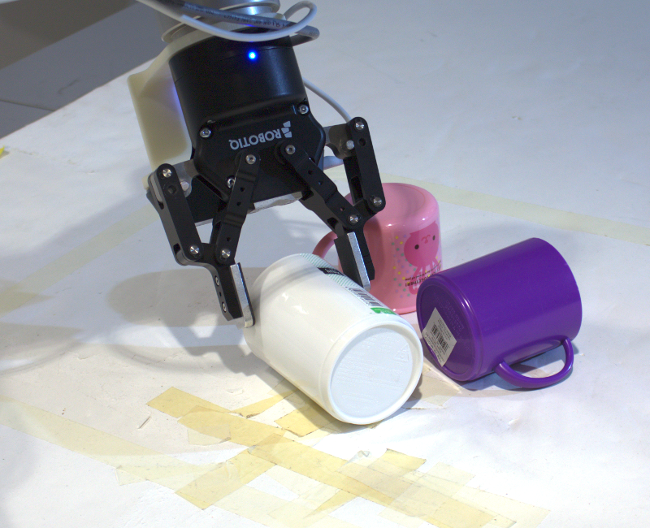}
  \includegraphics[width=0.49\linewidth]{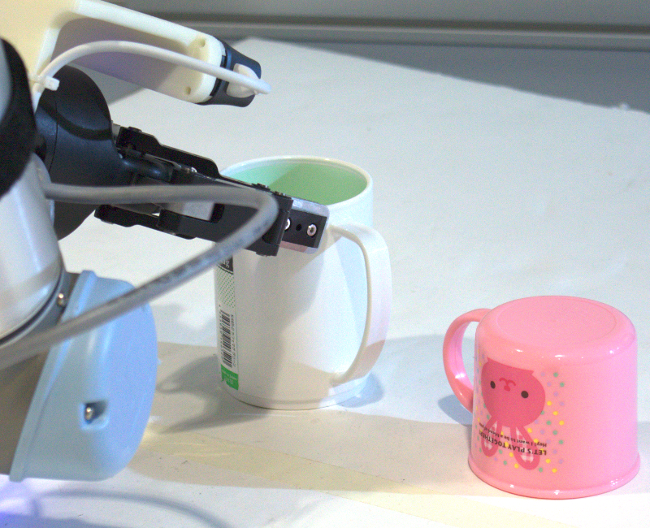}
  \caption{6-DoF pick place on the UR5 system. \textbf{Top}. Blocks task. \textbf{Bottom}. Mugs task. Notice the grasp is diagonal to the mug axis, and the robot compensates for this by placing diagonally with respect to the table surface.}
  \label{fig:6DofPickPlace}
\end{figure}

\begin{table}[ht]
  \centering
  \begin{tabular}{|c|c|c|c|} 
  \hline
  & Blocks & Mugs & Bottles\\
  \hline
  Grasp & 0.96 & 0.86 & 0.89\\
  \hline
  Place & 0.67 & 0.89 & 0.64\\
  \hline
  Task & 0.64 & 0.76 & 0.57\\
  \hline
  \hline
  $n$ Grasps & 50 & 51 & 53\\
  \hline
  $n$ Places & 48 & 44 & 47\\
  \hline
  \end{tabular}
  \caption{\textbf{Top.} Grasp, place, and task success rates for the 3 tasks with $t_\mathit{max}=2$ (i.e., 1 pick 1 place). \textbf{Bottom.} Number of grasp and place attempts.}
  \label{tab:pickPlace}
  \vspace{-0.2in}
\end{table}

\section{Conclusion}
\label{sec:conclusion}

The primary conclusion is that the sense-move-effect abstraction, when coupled with hierarchical spatial attention, is an effective way of simultaneously handling (a) high-resolution 3D observations and (b) high-dimensional, continuous action spaces. These two issues are intrinsic to realistic problems of robot learning. We provide several other considerations relevant to systems employing spatial attention:

\subsection{Secondary Conclusions}

\begin{itemize}[leftmargin=*]
\item Compared to a flat representation, HSA can result in an exponential reduction in the number of actions that need to be sampled (Section~\ref{sec:hsa}).
\item HSA generalizes DQN, and lookahead HSA generalizes Deictic Image Mapping (Section~\ref{sec:otherApproaches}).
\item The partial observability induced by an HSA observation does not preclude learning an optimal policy (Section~\ref{sec:tabularDomain}).
\item HSA may take longer to learn than DQN in terms of the number of episodes to convergence, but HSA executes faster when the number of actions is large (Section~\ref{sec:uprightPegsOnDisks}).
\item Lookahead HSA is preferred to standard HSA in terms of the number of the episodes to train, but execution time is longer by a constant and the learning benefit diminishes when coupled with function approximation (Sections~\ref{sec:tabularDomain}~and~\ref{sec:uprightPegsOnDisks}).
\item HSA can be applied to realistic problems on a physical robotic system (Sections~\ref{sec:bottlesOnCoasters}~and~\ref{sec:pickPlace}).
\end{itemize}

\subsection{Limitations and Future Work}

A concern with all deep RL methods is that modeling and optimization errors induced by the use of function approximation prevent the robot from learning an optimal policy. This is true for even simple problems, such as the upright pegs on disks problem of Section~\ref{sec:uprightPegsOnDisks}. Also, how manipulation skills can be automatically and efficiently transferred to different but related tasks remains an open question. Even small changes to the task, such as the inclusion of distractor objects, requires complete retraining of the system for maximum performance. Finally, HSA is a competing approach to policy search methods in that both can handle high-dimensional, continuous action spaces. It would be interesting to see a comparison between these approaches. Previous value-based approaches like DQN cannot handle the high-dimensional action spaces prevalent in robotics; thus, HSA enables a comparison between value and policy search methods for these domains.

\section*{Acknowledgment}

We thank Andreas ten Pas for reviewing early drafts of this paper and the anonymous reviewers for their insightful comments. Funding was provided by NSF (IIS-1724257, IIS-1724191, IIS-1750649, IIS-1830425, IIS-1763878), ONR (N00014-19-1-2131), and NASA (80NSSC19K1474).

\bibliographystyle{IEEEtran}
\bibliography{References}



\begin{IEEEbiography}[{\includegraphics[width=1in,height=1.25in,clip,keepaspectratio]{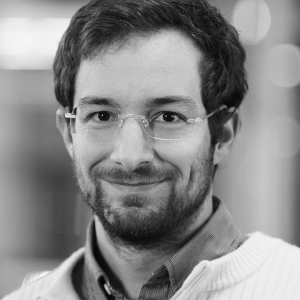}}]{Marcus Gualtieri} is a PhD student at Northeastern University in Boston, Massachusetts. In 2017 he received the MS degree in computer science from Northeastern, and 2008 he received the BS degree in software engineering from Florida Institute of Technology. His research interests include robot learning and planning in unstructured environments.
\end{IEEEbiography}

\begin{IEEEbiography}[{\includegraphics[width=1in,height=1.25in,clip,keepaspectratio]{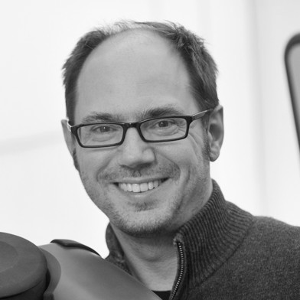}}]{Robert Platt} is an associate professor at Northeastern. Prior to that, he was a research scientist at MIT and a robotics engineer at NASA. He earned his PhD in Computer Science in 2006 from the University of Massachusetts, Amherst. His research interests primarily include perception, planning, and control for robotic manipulation.
\end{IEEEbiography}


\vfill


\end{document}